\newcolumntype{Y}{>{\centering\arraybackslash}X}
\definecolor{Gray}{gray}{0.8}
\definecolor{LightGray}{gray}{0.95}
\newtheorem{theorem}{Theorem} 
\newtheorem{lemma}{Lemma} 
\newtheorem{fact}{Fact} 
\newtheorem{cor}{Corollary} %[section]
\newtheorem{assumption}{Assumption}
\def\R{\mathbb{R}}
\def\E{\mathbb{E}}
\newcommand{\C}{\mathcal{C}}
\newcommand{\GG}{\mathbb{G}}
\renewcommand{\P}{\mathbb{P}}
\newcommand{\A}{\mathcal{A}}
\newcommand{\Var}{\mathrm{Var}}
\newcommand{\cd}{\stackrel{d}{\to}}
\newcommand{\Xt}{\tilde{X}}
\newcommand{\Sigmaols}{\Sigma_{\mathrm{OLS}}}
\newcommand{\Sigmaolshat}{\hat{\Sigma}_{\mathrm{OLS}}}
\newcommand{\Xbb}{\mathbb{X}}
\newcommand{\Ybb}{\mathbb{Y}}
\newcommand{\thetaclass}{\hat{\theta}^{\mathrm{class}}}
\newcommand{\Lclass}{L^{\mathrm{class}}}
\newcommand{\thetaPP}{\hat{\theta}^{\mathrm{PP}}}
\newcommand{\ntr}{n_{\mathrm{tr}}}
\newcommand{\jupyter}[1]{\href{#1}{\begingroup
\setbox0=\hbox{\includegraphics[height=1.5em]{figures/jupyter-logo.pdf}}%
\parbox{\wd0}{\box0}\endgroup}}
\DeclareMathOperator*{\argmin}{arg\,min}
\def\blfootnote{\xdef\@thefnmark{}\@footnotetext}
\title{Cross-Prediction-Powered Inference}
\author{Tijana Zrnic$^*$\quad \quad Emmanuel J. Cand\`es$^\dagger$\\ \\ 
$^*$Department of Statistics and Stanford Data Science\\
$^\dagger$Department of Statistics and Department of Mathematics\\ Stanford University}
\date{}
\begin{document}

\maketitle

\begin{abstract}
While reliable data-driven decision-making hinges on high-quality labeled data, the acquisition of quality labels often involves laborious human annotations or slow and expensive scientific measurements. Machine learning is becoming an appealing alternative as sophisticated predictive techniques are being used to quickly and cheaply produce large amounts of predicted labels; e.g.,~predicted protein structures are used to supplement experimentally derived structures, predictions of socioeconomic indicators from satellite imagery are used to  supplement accurate survey data, and so on. Since predictions are imperfect and potentially biased, this practice brings into question the validity of downstream inferences. We introduce \emph{cross-prediction}: a method for valid inference powered by machine learning. With a small labeled dataset and a large unlabeled dataset, cross-prediction imputes the missing labels via machine learning and applies a form of debiasing to remedy the prediction inaccuracies. The resulting inferences achieve the desired error probability and  
are more powerful than those that only leverage the labeled data. Closely related is the recent proposal of prediction-powered inference \cite{angelopoulos2023prediction}, which assumes that a good pre-trained model is already available. We show that cross-prediction is consistently more powerful than an adaptation of prediction-powered inference in which a fraction of the labeled data is split off and used to train the model. Finally, we observe that cross-prediction gives more stable conclusions than its competitors; its confidence intervals typically have significantly lower variability.
\end{abstract}

\section{Introduction}
\label{sec:intro}

As data-driven decisions fuel progress across science and technology, ensuring that such decisions are reliable is of critical importance. The reliability of data-driven decision-making rests on having access to high-quality data on one hand, and properly accounting for uncertainty on the other.

One frequently discussed issue is that acquiring high-quality data often involves laborious human labeling, or slow and expensive scientific measurements, or overcoming privacy concerns when human subjects are involved. Machine learning offers a promising alternative: sophisticated techniques such as generative modeling and deep neural networks are being used to cheaply produce large amounts of data that would otherwise be too expensive or time-consuming to collect. For example, tools to predict protein structure are supporting wide-ranging research in biology \cite{jumper2021highly, tunyasuvunakool2021highly,bludau2022structural, lin2023evolutionary}; large language models are being used to generate difficult-to-aggregate information about materials that can be used to fight climate change~\cite{chatgptchemistry}; predictions of socioeconomic and environmental conditions based on satellite imagery are being used for downstream policy decisions~\cite{jean2016combining,steele2017mapping,ball2017comprehensive,rolf2021generalizable}. This increasingly common practice, marked by supplementing high-quality data with machine learning outputs, calls for new principles of uncertainty quantification.

In this work we study this problem in the semi-supervised context, where labels are scarce but features are abundant. For example, precise measurements of environmental conditions are difficult to come by but satellite imagery is abundant.
Due to its volume, satellite imagery is routinely used in combination with computer vision algorithms to predict a range of factors on a global scale, including deforestation~\cite{hansen2013high}, poverty rates \cite{jean2016combining}, and population densities~\cite{robinson2017deep}. These predictions provide a compelling substitute for resource-intensive ground-based measurements and surveys.
% In such a setting, it is tempting---and, in fact, common practice---to predict and impute the missing labels using state-of-the-art machine learning.
However, it is crucial to acknowledge that, while promising, the predictions are not infallible. Consequently, downstream inferences that uncritically treat them as ground truth will be invalid.

\begin{figure}[t]
\centering
\includegraphics[width = 0.295\textwidth]{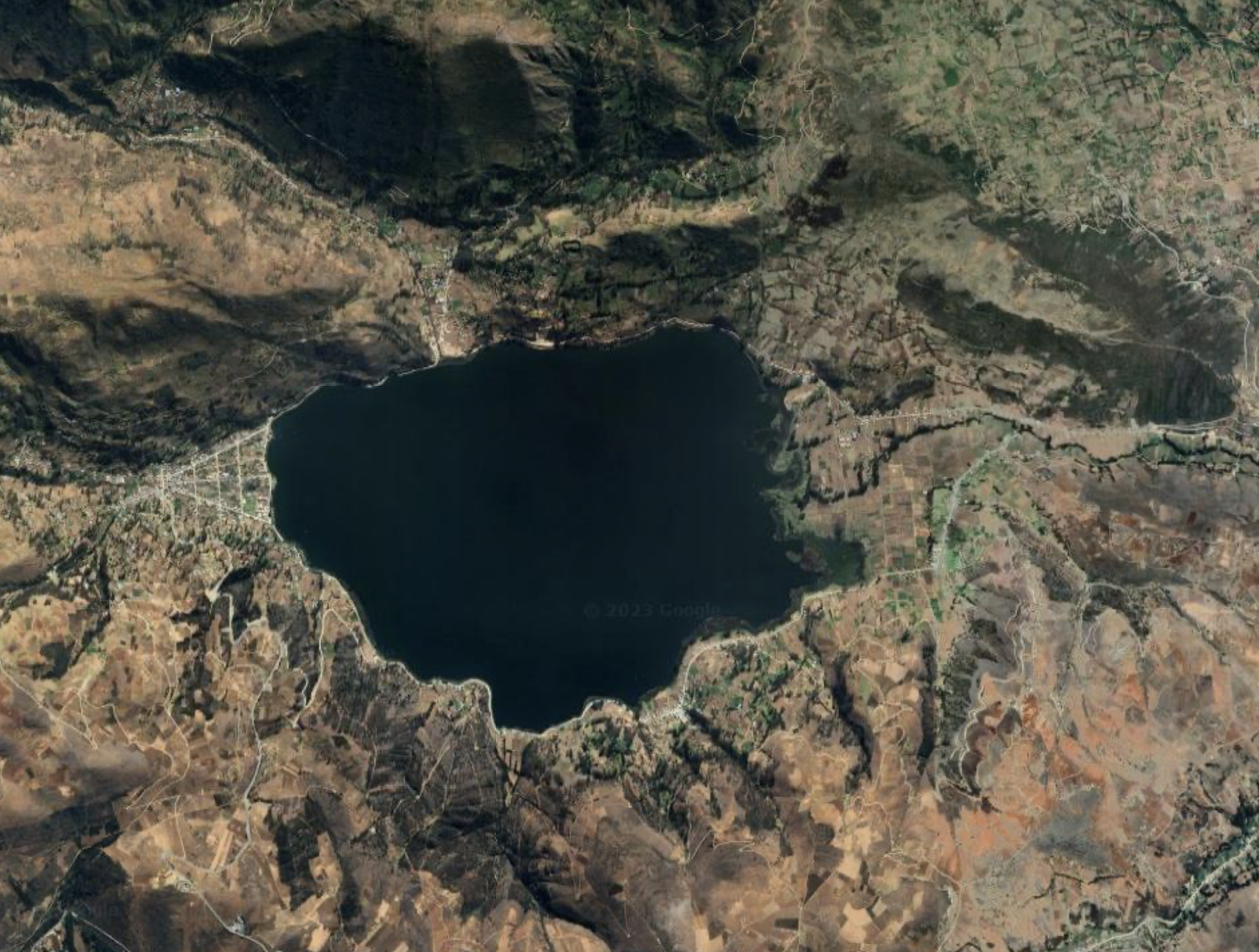}
\includegraphics[width = 0.3075\textwidth]{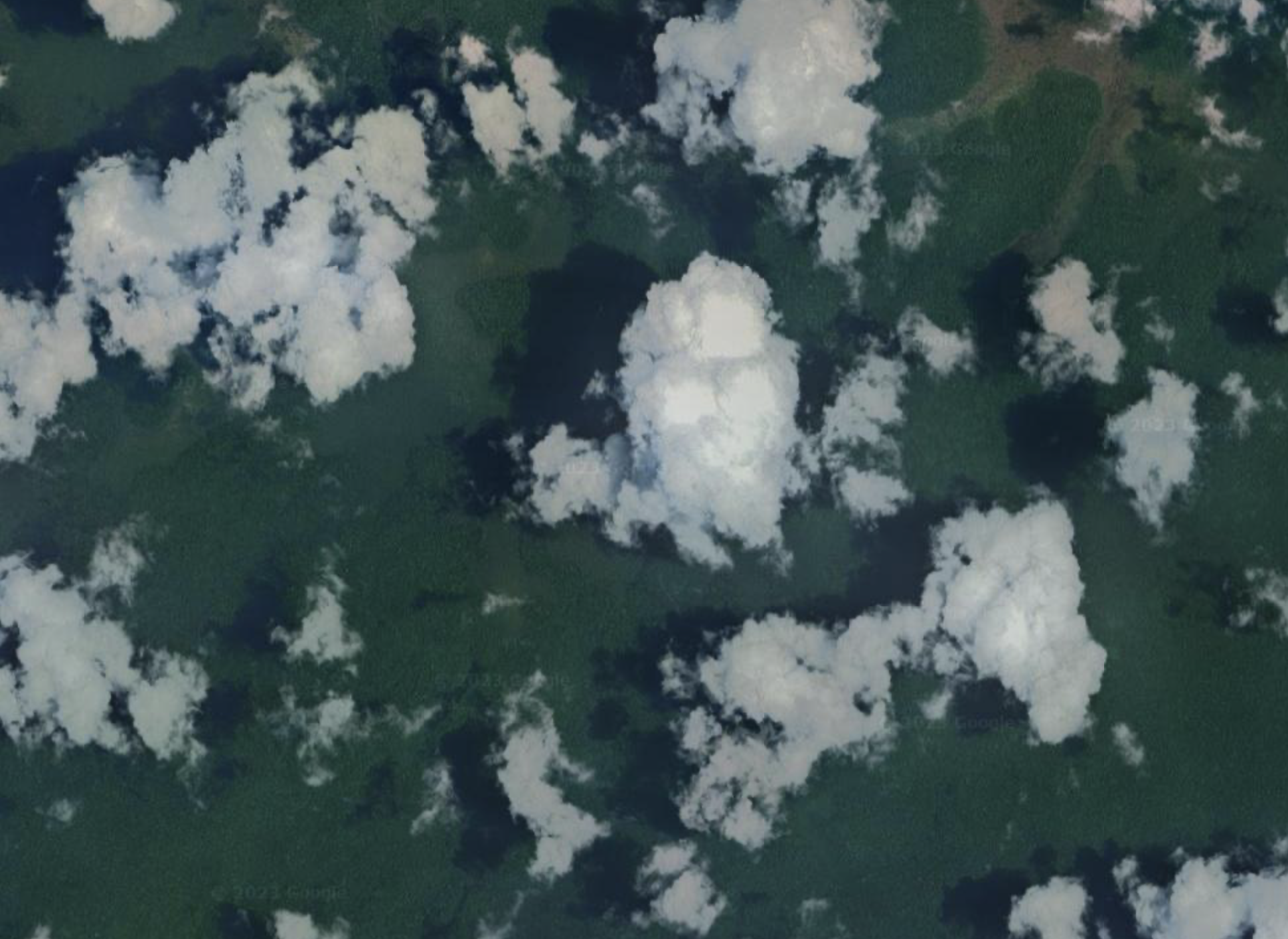}
\includegraphics[width = 0.3\textwidth]{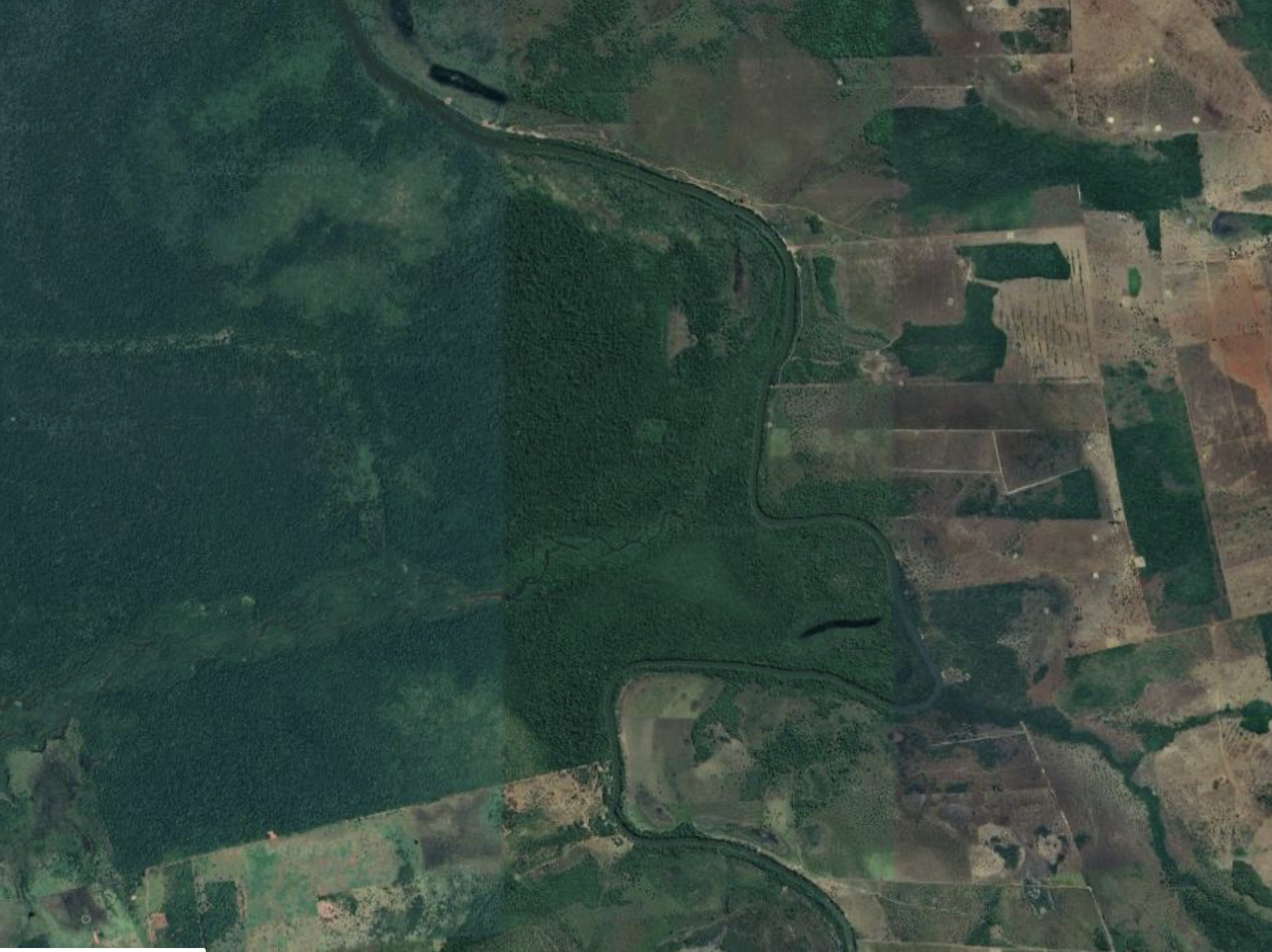}
\caption{Examples of GEE satellite imagery used in the deforestation analysis of \citet{bullock2020satellite}.}
\label{fig:satellite_imagery}
\end{figure}

\begin{figure}[b]
\centering
\includegraphics[width = \textwidth]{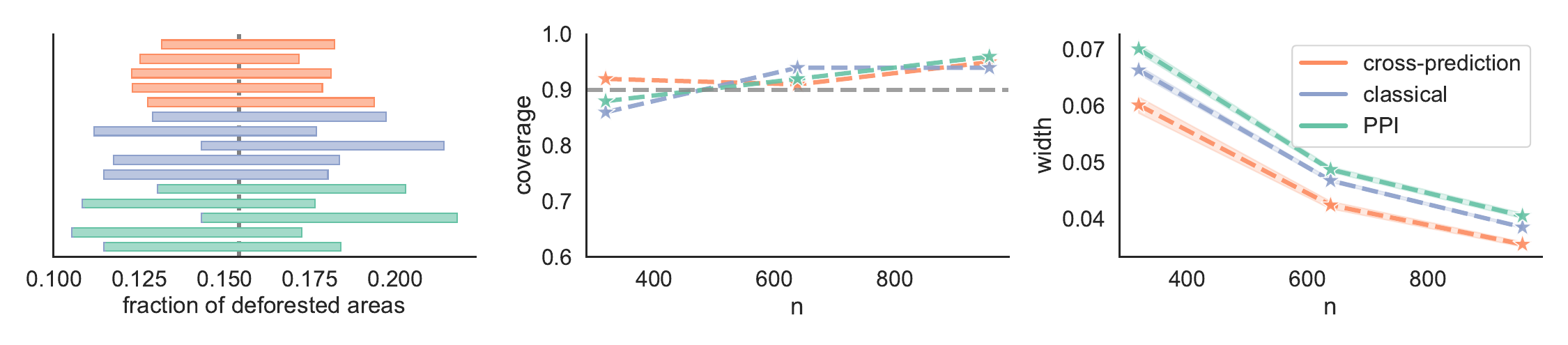}
\caption{\textbf{Estimating the deforestation rate in the Amazon from satellite imagery.} Left: Example intervals constructed by cross-prediction, classical inference, and prediction-powered inference (PPI), for five random splits into labeled and unlabeled data and a fixed number of gold-standard deforestation labels, $n=319$. Middle and right: Coverage and interval width averaged over $100$ random splits into labeled and unlabeled data, for $n\in\{319, 638, 957\}$. The target of inference is the fraction of the Amazon rainforest lost between 2000 and 2015 (gray line in left panel). The target coverage is $90\%$ (gray line in middle panel).
}
\label{fig:deforestation}
\end{figure}

We introduce \emph{cross-prediction}: a broadly applicable method for semi-supervised inference that leverages the power of machine learning while retaining validity. Assume a researcher holds both a small labeled dataset and a large unlabeled dataset, and they seek inference---i.e., a p-value or a confidence interval---about a population-level quantity such as the mean outcome or a regression coefficient. Cross-prediction carefully leverages black-box machine learning to impute the missing labels, resulting in both valid and powerful inferences. The validity is a result of a particular debiasing step; the power is a result of using sophisticated predictive techniques such as deep learning or random forests. We show that the use of black-box predictions on the unlabeled data can lead to a massive improvement in statistical power compared to using the labeled data alone.

Cross-prediction builds upon the recent proposal of \emph{prediction-powered inference} \cite{angelopoulos2023prediction}. Unlike prediction-powered inference, we do not assume that our researcher already has access to a predictive model for imputing the labels. Rather, to apply prediction-powered inference, the researcher would need to use a portion of the labeled data to
either train a model from scratch or fine-tune an off-the-shelf model.
We show that this leads to a suboptimal solution. Consider the following example studied by \citet{angelopoulos2023prediction}.
The goal is to form a confidence interval for the fraction of the Amazon rainforest that was lost between 2000 and 2015. A small number of ``gold-standard'' deforestation labels for certain parcels of land are available, having been collected through field visits \cite{bullock2020satellite}. In addition, satellite imagery is available for the entire Amazon; see Figure \ref{fig:satellite_imagery} for Google Earth Engine (GEE) examples used in the deforestation study of \citet{bullock2020satellite}.
Angelopoulos et al. apply prediction-powered inference after using a portion of the labeled data and a gradient-boosted tree to fine-tune a regression-tree-based predictor of forest cover \cite{sexton2013global}. Our work offers an alternative: we can avoid data splitting and instead apply cross-prediction, still with a gradient-boosted tree, to perform the fine-tuning. By doing so, we significantly reduce the size of the confidence interval, as seen in  Figure \ref{fig:deforestation}. This trend will be consistent throughout our experiments: cross-prediction is more efficient than prediction-powered inference with data splitting. Figure \ref{fig:deforestation} also shows that cross-prediction outperforms ``classical'' inference, which forms a confidence interval based on gold-standard labels only and simply ignores the unlabeled data. 
Additional details about these experiments can be found in Section \ref{sec:deforestation}.

 Another important takeaway from Figure \ref{fig:deforestation} is that cross-prediction gives more stable inferences: the confidence intervals have lower variability than the intervals computed via baseline approaches. Intuitively, classical inference has higher variability due to the smaller sample size, while prediction-powered inference has higher variability due to the arbitrariness in the data split. We will quantify the stability of cross-prediction in our experiments in Section \ref{sec:experiments}, showcasing its superiority across a range of examples, see Table~\ref{table:realdata}.

% Both cross-prediction and prediction-powered inference cover the true target with the desired probability.

Our work is also related to the literature known as \emph{semi-supervised inference} \cite{zhang2019semi}. The main difference between existing approaches and our work is that our proposal leverages black-box machine learning methods, allowing for more complicated data modalities (such as high-dimensional imagery) and more sophisticated ways of leveraging the unlabeled data. We elaborate on the relationship to prior work in Section \ref{sec:related_work}.

\section{Problem setup}

We study statistical inference in a semi-supervised setting, where collecting high-quality labels is challenging but feature observations are abundant.
Formally, we have a dataset consisting of $n$ i.i.d.~feature--label pairs, $\{(X_1,Y_1),\dots,(X_n,Y_n)\} \sim \P^n$. 
In addition, we have a dataset consisting of $N$ unlabeled data points, $\{\Xt_1,\dots,\Xt_N \}\sim \P_X^N$, where $\P_X$ denotes the marginal distribution over features according to $\P$. We are most interested in the regime where $N\gg n$, as in the case where feature collection is far cheaper than label collection. 

Our goal is to perform inference on a property $\theta^*(\P)$ of the data-generating distribution $\P$, such as the mean outcome, a quantile of the outcome distribution, or a regression coefficient. Our proposal handles all estimands defined as a solution to an \emph{M-estimation} problem:
\begin{equation}
\label{eq:M-est}
\theta^*(\P) = \argmin_\theta L(\theta), \text{ where } L(\theta):= \E\left[\ell_{\theta}(X,Y)\right] \text{ and } (X, Y) \sim \P,
\end{equation}
for a convex loss function $\ell_\theta$. Here and throughout, $(X,Y)$ denotes a generic sample from $\P$ independent of everything else. All of the aforementioned estimands can be cast in the form~\eqref{eq:M-est}. For example, if the target of inference is the mean outcome, $\theta^*(\P)=\E[Y]$, then $\theta^*(\P)$ minimizes the squared loss:
\begin{align}
\label{eq:sq_loss}
\theta^*(\P) = \argmin_\theta \E[\ell_\theta(Y)] = \argmin_\theta \E[(Y-\theta)^2].
\end{align}
Note that the estimand (and thus the loss) will sometimes only depend on a subset of the features $X$ or only on the outcome $Y$, as in Eq.~\eqref{eq:sq_loss}. Also note that this manuscript focuses on $\theta^*(\P)\in\R^d$ for a \emph{fixed}~$d$. Studying high-dimensional settings---for example, understanding what scaling of $d$ is permitted by the theory---is a valuable direction for future work. Below, we write $\theta^* = \theta^*(\P)$ for short. 

The main question we address is this: how should we leverage the unlabeled data to achieve both valid and powerful inference? Validity alone is an easy target: we can simply dispense with the unlabeled data and find the classical estimator $\thetaclass$, defined as 
\begin{equation}
\label{eq:classical_obj}
\thetaclass = \argmin_\theta  \Lclass(\theta), \text{ where }  \Lclass(\theta) :=  \frac{1}{n} \sum_{i=1}^n \ell_\theta(X_i, Y_i).
\end{equation}
For all standard estimands defined via M-estimation---including means, quantiles, linear regression coefficients---there are off-the-shelf confidence intervals around $\thetaclass$ that cover $\theta^*$ with a desired probability in the large-sample limit \cite[see, e.g.,][]{lehmann2022testing, vandervaart1998asymptotic}. The classical estimator and the corresponding confidence intervals shall be the main comparison points used to evaluate the performance of cross-prediction.

\subsection{Related work}
\label{sec:related_work}

We discuss the relationship between our work and the most closely related technical scholarship.

\textbf{Semi-supervised inference.} Our work falls within the literature known as \emph{semi-supervised inference}~\cite{zhang2019semi}. Most existing work develops methods specialized to particular estimation problems, such as mean estimation~\cite{zhang2019semi, zhang2022high}, quantile estimation \cite{chakrabortty2022semi}, or linear regression \cite{azriel2022semi, tony2020semisupervised}. One exception is the recent work of \citet{song2023general}, who also study general M-estimation. Their approach uses a projection-based correction to the classical loss \eqref{eq:classical_obj} based on simple statistics from the unlabeled data, such as averages of low-degree polynomials of the features. Unlike existing proposals, our approach is based on imputing the missing labels using black-box machine learning methods, allowing for more complicated data modalities and more intricate ways of leveraging the unlabeled data. For example, it is unclear how to apply existing methods when the features $X_i$ are high-dimensional images. We also note that the semi-supervised observation model has been long studied in semi-supervised learning \cite{zhu2005semi,van2020survey,zhu2022introduction}. However, in this literature the goal is prediction, rather than inference.

\textbf{Prediction-powered inference.} The core idea in this paper is to correct imputed predictions, and this derives from the proposal of \emph{prediction-powered inference}~\cite{angelopoulos2023prediction}. However, a key assumption in prediction-powered inference is that, in addition to a labeled and an unlabeled dataset, the analyst is given a good pre-trained machine learning model. We make no such assumption. To apply the theory of prediction-powered inference, our setting would require using a portion of the labeled data for model training and leaving the rest for inference. In contrast, cross-prediction leverages each labeled data point for both model training and inference, leading to a boost in statistical power. The distinction between having and not having a pre-trained model makes a difference even when comparing prediction-powered inference and the classical approach. \citet{angelopoulos2023prediction} do not take into account the data used for model training when comparing the two baselines, because the model is assumed to have been trained before the analysis takes place. This makes sense when considering off-the-shelf models such as AlphaFold. In our comparisons we do take the training data into account. 

\citet{angelopoulos2023ppipp} show a central limit theorem for the prediction-powered estimator, allowing for computational and statistical improvements over the original methods for prediction-powered inference.  Our inferences will be based on a similar central limit theorem for cross-prediction.

\citet{wang2020methods} similarly study inferences based on machine learning predictions. They propose using the labeled data to train a predictor of true outcomes from predicted ones, and then applying the predictor to debias the predictions on the unlabeled data. This algorithm does not come with a formal validity guarantee. \citet{motwani2023valid} conduct a detailed empirical comparison of the method of Wang et al. and prediction-powered inference.

\textbf{Theory of cross-validation.} Cross-prediction is based on a form of cross-fitting. Consequently, our analysis is related to the theoretical studies of cross-validation \cite{dudoit2005asymptotics,austern2020asymptotics,bayle2020cross, kissel2022high, bates2023cross}. In particular, our theory borrows from the analysis of \citet{bayle2020cross}, who prove a central limit theorem and study inference on the cross-validation test error. Our goal, however, is entirely different; we aim to provide inferential guarantees for an estimand $\theta^*$, as defined in Eq.~\eqref{eq:M-est}, in a semi-supervised setting.

\textbf{Semiparametric inference.} Our work is also related to the rich literature on semiparametric inference \cite{levit1976efficiency, hasminskii1979nonparametric, klaassen1987consistent, robinson1988root, bickel1993efficient, newey1994asymptotic, robins1995semiparametric, chernozhukov2018double}, where the goal is to do estimation in the presence of a high-dimensional nuisance parameter. Our debiasing strategy closely resembles doubly-robust estimators \cite{bang2005doubly}, such as the AIPW estimator \cite{robins1994estimation, rotnitzky1998semiparametric}, and one-step estimators \cite{newey1994large}. In this literature the estimand is typically an expected value, such as the average treatment effect. One exception is the work of \citet{jin2023tailored}, who study general M-estimators through a semiparametric lens. The use of cross-fitting is common in that literature as well \cite{chernozhukov2018double,chernozhukov2022locally, chernozhukov2023simple}. While the technical arguments used in our work bear resemblance to those classically used in semiparametric inference, our motivation is different. Our focus is on showcasing how a theoretically-principled use of black-box predictors---neural networks, random forests, and so on---on massive amounts of unlabeled data can boost inference. Since the practice of leveraging unlabeled data through predictions is already prevalent in domains such as remote sensing, our goal is to ground it in statistical theory.

\textbf{Inference with missing data.} Semi-supervised inference can be seen as a special case of the problem of inference with missing data~\cite{rubin1976inference}, where missing information about the labels occurs. Our proposed method bears similarities to multiple imputation \cite{rubin1987multiple,rubin1996multiple,schafer1999multiple} as, at least at a high level, it is based on ``averaging out'' multiple imputed predictions for the labels. However, our method is substantially different from multiple imputation, most notably due to the fact that it incorporates a particular form of debiasing to mitigate prediction inaccuracies.

\textbf{Inference under model misspecification.} Finally, our work relates to a large body of work on inference under model misspecification \cite[e.g.,][]{white1980heteroskedasticity, white1981consequences,buja2019modelsi,buja2019modelsii}. In particular, we do not assume that our predictions follow any ``true'' statistical model, and for parameters $\theta^*$ defined as a regression solution, we do not assume that the regression model is correct. For example, if $\theta^*$ is the solution to a linear regression, we do not assume that the data truly follows a linear model. Like in classical M-estimation, we will show asymptotic normality of our estimator despite the misspecification.

\section{Cross-prediction}

We propose \emph{cross-prediction}---an estimation technique based on a combination of cross-fitting and prediction. The basic idea is to impute labels for the unlabeled data points, and then remove the bias arising from the inaccuracies in the predictions using the labeled data. We give a step-by-step outline of the construction of the cross-prediction estimator. In the following sections we will show how to perform inference with this estimator; that is, how to perform hypothesis tests or construct confidence intervals for $\theta^*$.

\subsection{Cross-prediction for mean estimation}

Before discussing the general case, we consider the problem of mean estimation to gain intuition; the object of inference is simply $\theta^* = \E[Y]$.

We begin by partitioning the labeled dataset into $K$ chunks, $I_1 =\{1,\dots,n/K\}, I_2 =\{n/K + 1,\dots,2n/K\}$, and so on (we assume for simplicity that $n$ is divisible by $K$).\footnote{By removing at most $K-1$ data points, the size of the labeled dataset can be made divisible by $K$. Since in our applications $K$ will typically be equal to $10$, this truncation has a negligible effect.} Here, $K$ is a user-specified number of folds, e.g. $K=10$. Then, as in cross-validation, we train a machine learning model $K$ times, each time training on all data except one fold. Let $\mathcal{A}_{\mathrm{train}}$ denote a possibly randomized training algorithm, which takes as input a dataset of arbitrary size and outputs a predictor of labels from features. Then, for each $j\in[K]$, let $f^{(j)}$ be the model obtained by training on all folds but $I_j$; that is, $f^{(j)} = \mathcal{A}_{\mathrm{train}}(\{(X_i,Y_i)\}_{i\in[n]\setminus I_j})$. We note that $\mathcal{A}_{\mathrm{train}}$ can be quite general; it may or may not treat the training data points symmetrically, and $f^{(j)}$ need not come from a well-defined family of predictors. Rather, $f^{(j)}$ can be any black-box model;  e.g.~a random forest, a gradient-boosted tree, a neural network, and so on. Moreover, $f^{(j)}$ can be trained from scratch or obtained by fine-tuning an off-the-shelf model. Finally, we use the trained models to impute predictions and compute the cross-prediction estimator, defined as:
\begin{equation}
\label{eq:crossfit_mean}
\hat \theta^+= \frac{1}{K  N} \sum_{j=1}^K \sum_{i=1}^N f^{(j)}(\Xt_i) - \frac{1}{n} \sum_{j=1}^K \sum_{i\in I_j} (f^{(j)}(X_i) - Y_i). 
\end{equation}
Intuitively, the first term in Eq.~\eqref{eq:crossfit_mean} is an empirical approximation of the population mean if we treated the predictions as true labels. The second term in Eq.~\eqref{eq:crossfit_mean} serves to debias this heutistic: it subtracts an estimate of the bias between the predicted labels and the true labels.
We note that the estimator \eqref{eq:crossfit_mean} coincides with the mean estimator of \citet{zhang2022high} in the special case where $f^{(j)}$ are linear models, that is, $f^{(j)}(x) = x^\top \beta_j$ for some $\beta_j$. Our analysis applies more broadly, allowing for complex high-dimensional models (e.g., image classifiers).

Observe that the cross-prediction estimator is unbiased, i.e., $\E[\hat \theta^+] = \theta^*$. Indeed, since $i\in I_j$ is not used to train model $f^{(j)}$, we have $\E[f^{(j)}(\Xt_{i'})] = \E[f^{(j)}(X_i)]$ for all $j\in[K],i\in I_j,i'\in[N]$. Applying this identity yields $\E[\hat \theta^+] = \E[Y] = \theta^*$.

The classical estimator is of course the sample mean:
\begin{equation}
\label{eq:classical_mean}
\thetaclass = \frac 1 n \sum_{i=1}^n Y_i,
\end{equation}
which is also unbiased.
Given that both the cross-prediction estimator and the classical estimator are unbiased, it makes sense to ask which one has a lower variance. The main benefit of cross-prediction is that, if the trained models $f^{(j)}$ are reasonably accurate, we expect the variance of the cross-prediction estimator to be lower. To see this, first recall that, typically, $N\gg n$. This means that the first term in $\hat \theta^+$ should have a vanishing variance due to the magnitude of $N$. Therefore,
$$\Var(\hat \theta^+) \approx \Var\left(\frac{1}{n} \sum_{j=1}^K \sum_{i \in I_j} ( f^{(j)}(X_i) -  Y_i )\right).$$
As the sample mean, the remaining term is an average of $n$ terms. However, when the models are accurate, i.e. $f^{(j)}(X_i) \approx Y_i$, we expect $\Var(f^{(j)}(X_i) - Y_i) \ll \Var(Y_i)$.

The closest alternative to the cross-prediction estimator is the prediction-powered estimator~\cite{angelopoulos2023prediction}, that is, its straightforward adaptation to the setup without a pre-trained model. As discussed earlier, prediction-powered inference relies on having a pre-trained model $f$. We can reduce our setting to this case by introducing data splitting: we use the first $\ntr\leq n$ data points from the labeled dataset to train a model $f$ and the rest of the labeled data to compute the prediction-powered estimator:
\begin{equation}
\label{eq:pp_mean}
\thetaPP= \frac{1}{N}  \sum_{i=1}^N f(\Xt_i) - \frac{1}{n-\ntr}  \sum_{i=\ntr+1}^n (f(X_i) - Y_i).
\end{equation}
The prediction-powered estimator is also unbiased: $\E[\thetaPP] = \theta^*$. However, this strategy is potentially wasteful because, after $f$ is trained, the training data is thrown away and not subsequently used for estimation. Cross-prediction uses the data more efficiently, by leveraging each data point for both training and estimation.
% Our evaluations in Section \ref{sec:experiments} will demonstrate this efficiency gain empirically.

\subsection{General cross-prediction}

To introduce the cross-prediction estimator in full generality, recall that we are considering all estimands of the form \eqref{eq:M-est}. As in the case of mean estimation, we split the labeled data into $K$ folds and train a predictive model $f^{(j)}$ on all folds but fold $j\in[K]$. The proposed cross-prediction estimator is defined~as:
\begin{align}
\label{eq:crossfitted_est}
\hat\theta^+ = \argmin_\theta \ &L^+(\theta),
\text{ where } L^+(\theta) := \frac{1}{K N} \sum_{j=1}^K \sum_{i=1}^N \tilde \ell_{\theta,i}^{f^{(j)}} - \frac{1}{n} \sum_{j=1}^K \sum_{i \in I_j} ( \ell_{\theta,i}^{f^{(j)}} -  \ell_{\theta,i} ).
\end{align}
Here, we use the short-hand notation $\tilde \ell_{\theta,i}^{f^{(j)}} :=  \ell_{\theta}(\Xt_i,f^{(j)}(\Xt_i))$, $ \ell_{\theta,i}^{f^{(j)}} :=  \ell_{\theta}(X_i,f^{(j)}(X_i))$, and $ \ell_{\theta,i} :=  \ell_{\theta}(X_i,Y_i)$. The intuition remains the same as before: the first term is an empirical approximation of the population loss if we treated the predictions as true labels, and the second term aims to debias this heuristic. One can verify that the mean estimator in Eq.~\eqref{eq:crossfit_mean} is indeed a special case of the general estimator in Eq.~\eqref{eq:crossfitted_est}, by taking $\ell_\theta$ to be the squared loss, as per Eq.~\eqref{eq:sq_loss}.

The cross-prediction estimator optimizes an unbiased objective, since $\E[L^+(\theta)] = L(\theta)$. This follows because $\E[\ell_\theta (\Xt_{i'}, f^{(j)}(\Xt_{i'}))] = \E[\ell_\theta (X_i, f^{(j)}(X_i))]$ for all $j\in[K],i\in I_j,i'\in[N]$, seeing that $i\in I_j$ is not used to train model $f^{(j)}$.
Furthermore, by the same argument as before, we expect $L^+(\theta)$ to have a lower variance than the classical objective in Eq. \eqref{eq:classical_obj} if $N$ is large and the trained predictors are reasonably accurate.
We note that $L^+(\theta)$ may not be a convex function in general, but solving for $\hat\theta^+$ is tractable in many cases of interest. For example, in the case of means and generalized linear models, $L^+(\theta)$ is convex. 
% In the case of quantile estimation, $L^+(\theta)$ is not convex but finding $\hat\theta^+$ is tractable nevertheless; we provide more detail in Section \ref{sec:quantile}.

The prediction-powered estimator is similar to the cross-prediction estimator, but it requires data splitting and does not average over multiple model fits. It is equal to
\begin{align}
\label{eq:thetaPP}
\thetaPP = \argmin_\theta \ &L^{\mathrm{PP}}(\theta), \text{ where } L^{\mathrm{PP}}(\theta) := \frac{1}{N} \sum_{i=1}^N \tilde \ell_{\theta,i}^f - \frac{1}{n-\ntr} \sum_{i =\ntr+1}^n ( \ell_{\theta,i}^f -  \ell_{\theta,i}),
\end{align}
where, as before, $f$ is trained on the first $\ntr$ labeled data points.
The fact that cross-prediction averages the results of multiple model fits allows it to achieve more stable inference. Indeed, in our experiments we will observe that cross-prediction is more stable than prediction-powered inference throughout.

\section{Inference for the mean}
\label{sec:inference_means}

We now discuss inference with the cross-prediction estimator. For simplicity we first look at mean estimation, where $\theta^*=\E[Y]$. We will see that much of the discussion will carry over to general M-estimation problems.

Inference with the cross-prediction estimator in Eq.~\eqref{eq:crossfit_mean} is difficult because the terms being averaged are all dependent through the labeled data. In contrast, the classical estimator in Eq.~\eqref{eq:classical_mean} averages independent terms, allowing for confidence intervals based on the central limit theorem. The prediction-powered estimator in Eq.~\eqref{eq:pp_mean} is
similarly amenable to inference based on the central limit theorem, seeing that all the terms are independent conditional on $f$. In this section we show that, under a relatively mild regularity condition, the cross-prediction estimator likewise satisfies a central limit theorem. This will in turn immediately allow constructing confidence intervals and hypothesis tests for $\theta^*$.

The central limit theorem will require that, as the sample size grows, the predictions concentrate sufficiently rapidly around their expectation. Intuitively, one can think of the condition as requiring that the predictions are sufficiently stable.
While the stability property is difficult to verify for complex black-box models, we empirically observe that inference based on the resulting central limit theorem nevertheless provides the correct coverage. We observe this across different estimation problems, data modalities, sample sizes, and so on.

Our analysis based on stability is inspired by the work of \citet{bayle2020cross}, who study inference on the cross-validation test error, since the inferential challenges in cross-prediction are similar to those in cross-validation. The ultimate goals of the two analyses are, however, entirely different.

Below we state the stability condition.
% We use $\Var_X$ to denote the conditional variance, conditional on everything but $X$. 
For all $x$, we define $\bar f(x) := \E[f^{(1)}(x)]$; the ``average'' model $\bar f$ is the predictor we would obtain if we could train many models on independent datasets of size $n-n/K$ and average out their predictions. % (Both $f^{(1)}$ and $\bar f$ depend on the sample size used for training.)

\begin{assumption}
\label{ass:stability}
 We say that the predictions are stable if, as $n\to\infty$,
 $$\sqrt{K \, \Var\left(f^{(1)}(X) - \bar f(X) \mid f^{(1)} \right)} \stackrel{L^1}{\rightarrow} 0.$$
 \end{assumption}

% Note that both $f^{(1)}$ and $\bar f$ depend on the sample size used for training, $n-n/K$. Intuitively, the ``average'' model $\bar f$ is the predictor we would obtain if we could train many models on independent datasets of size $n-n/K$ and average out their predictions.

Assumption \ref{ass:stability} requires that the models $f^{(j)}$ converge to their ``average'' model $\bar f$, but there is no assumption that $\bar f$ is by any means well-specified. If the number of folds is fixed (e.g., $K=10$), as we will typically assume, then Assumption \ref{ass:stability} is satisfied if the variance of the difference between the learned predictions $f^{(1)}(X)$ and the average predictions $\bar f(X)$ vanishes at any rate, $\Var\left(f^{(1)}(X) - \bar f(X) \mid f^{(1)} \right) \stackrel{L^1}{\rightarrow} 0$. We expect that any reasonably stable learning algorithm $\A_{\mathrm{train}}$ should satisfy Assumption \ref{ass:stability} (intuitively, any algorithm not too sensitive to perturbing a single data point). Violations of the assumption might arise if the number of folds is allowed to grow, e.g. as in the case of leave-one-out cross-fitting, since then the variance has to tend to zero sufficiently rapidly.

Equipped with Assumption \ref{ass:stability}, we can now state the central limit theorem for cross-prediction.

\begin{theorem}[Cross-prediction CLT for the mean]
\label{thm:clt_mean}
Let $\theta^*$ be the mean outcome, $\theta^* = \E[Y]$.
Suppose that the predictions are stable (Ass. \ref{ass:stability}). Further, assume that $\frac n N$ has a limit, and that  $\bar \sigma^2 = \Var(\bar f(X))$ and $\bar \sigma_\Delta^2 = \Var(\bar f(X) - Y)$ have a nonzero limit. Then,
$$\frac{\sqrt{n}}{\sqrt{\frac n N \bar\sigma^2 + \bar \sigma_\Delta^2}} \left(\hat \theta^+ - \theta^* \right) \cd \mathcal N\left(0,1\right).$$
\end{theorem}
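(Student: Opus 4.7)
The plan is to decompose $\hat\theta^+ - \theta^*$ into an ``oracle'' estimator built from the deterministic mean predictor $\bar f$ plus a remainder governed by Assumption \ref{ass:stability}. Writing $f^{(j)} = \bar f + (f^{(j)} - \bar f)$ in the definition of $\hat\theta^+$ and regrouping,
$$\hat\theta^+ - \theta^* = \left(\frac{1}{N}\sum_{i=1}^N \bar f(\tilde X_i) - \frac{1}{n}\sum_{i=1}^n (\bar f(X_i) - Y_i) - \theta^*\right) + R_n,$$
where
$$R_n := \frac{1}{KN}\sum_{j=1}^K\sum_{i=1}^N \bigl(f^{(j)}(\tilde X_i) - \bar f(\tilde X_i)\bigr) - \frac{1}{n}\sum_{j=1}^K\sum_{i \in I_j}\bigl(f^{(j)}(X_i) - \bar f(X_i)\bigr).$$
The task reduces to (i) a CLT for the oracle piece and (ii) showing $\sqrt n\, R_n = o_p(1)$; combining the two by Slutsky gives the claim.

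For (i), $E[\bar f(\tilde X)] - E[\bar f(X) - Y] = E[Y] = \theta^*$, and the two sample averages are independent since they involve disjoint data. Each is an average of i.i.d.\ terms with per-summand variances $\bar\sigma^2$ and $\bar\sigma_\Delta^2$, forming a triangular array because $\bar f$ depends on $n$. Applying Lindeberg--Feller to each sum separately (invoking mild uniform integrability of $\bar f(X)^2$ and $Y^2$ implicit in the convergence of $\bar\sigma^2$ and $\bar\sigma_\Delta^2$ to nonzero limits) and combining via independence yields
$$\frac{\sqrt n}{\sqrt{(n/N)\bar\sigma^2 + \bar\sigma_\Delta^2}}\left(\frac{1}{N}\sum_{i=1}^N \bar f(\tilde X_i) - \frac{1}{n}\sum_{i=1}^n (\bar f(X_i) - Y_i) - \theta^*\right) \cd \mathcal{N}(0,1).$$

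For (ii), the key structural fact is $E[R_n \mid \{f^{(j)}\}_{j=1}^K] = 0$ \emph{exactly}. For $i \in I_j$ we have $X_i \perp f^{(j)}$, so the labeled contribution has conditional mean $\frac{1}{K}\sum_j E[f^{(j)}(X) - \bar f(X) \mid f^{(j)}]$; independence of the $\tilde X_i$ from all $f^{(j)}$ yields the same value for the unlabeled contribution, and the two cancel. Writing $V_j := \Var(f^{(j)}(X) - \bar f(X) \mid f^{(j)})$, the same conditional independence structure, plus the crude bound $\Var(\sum_j g_j(\tilde X)) \le K\sum_j \Var(g_j(\tilde X))$ for the $K$ predictors sharing a common covariate on the unlabeled side, gives
$$n \cdot \Var(R_n \mid \{f^{(j)}\}) \le \Bigl(1 + \tfrac{n}{N}\Bigr)\cdot\frac{1}{K}\sum_{j=1}^K V_j.$$
By subadditivity of the square root and identical distribution of the $V_j$, $E\sqrt{\frac{1}{K}\sum_j V_j} \le \frac{1}{\sqrt K}\sum_j E\sqrt{V_j} = E\sqrt{K V_1} \to 0$ by Assumption \ref{ass:stability}. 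Hence $\frac{1}{K}\sum_j V_j \to 0$ in $L^1$, and since $n/N$ is bounded, $n\,\Var(R_n \mid \{f^{(j)}\}) \to 0$ in probability. Combined with the vanishing conditional mean, conditional Chebyshev gives $\sqrt n\, R_n = o_p(1)$.

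The main obstacle is step (ii), specifically obtaining the factor $1/K$ (rather than $K$) multiplying $\sum_j V_j$ in the conditional variance bound: this relies on carefully exploiting both the fold-wise conditional independence between $X_i$ and $f^{(j)}$ and the independence of the labeled from the unlabeled data, so that the $1/n$ normalization in $R_n^{\mathrm{lab}}$ and $1/(KN)$ normalization in $R_n^{\mathrm{unl}}$ each deliver their own $1/K$. Without this careful accounting one would need $E[K V_1] \to 0$, which is strictly stronger than the $E\sqrt{K V_1}\to 0$ permitted by Assumption \ref{ass:stability}. The Lindeberg step in (i) is comparatively routine.
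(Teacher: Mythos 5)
Your overall strategy---swap each $f^{(j)}$ for the average model $\bar f$, prove a CLT for the resulting oracle estimator, and use the stability assumption to kill the remainder---is exactly the paper's. Part (i) matches the paper's Lindeberg step. The gap is in part (ii), and it sits precisely at the point you flag as the ``main obstacle.'' You condition on the entire collection $\{f^{(j)}\}_{j=1}^K$ at once and treat the labeled points as if they were independent of that conditioning. But for $i\in I_j$, while $X_i$ is independent of $f^{(j)}$, it is \emph{not} independent of $f^{(j')}$ for $j'\neq j$, since fold $I_j$ is part of the training set of every other model. Consequently the conditional law of $X_i$ given $\{f^{(j)}\}_{j=1}^K$ is not $\P_X$, so neither your claim that $\E[R_n\mid\{f^{(j)}\}]=0$ exactly, nor your conditional variance bound $n\,\Var(R_n\mid\{f^{(j)}\})\le(1+n/N)\frac{1}{K}\sum_j V_j$ (whose labeled part implicitly sums per-fold variances as if the $X_i$ were conditionally i.i.d.\ from $\P_X$), is justified. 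A separate, smaller slip: $\E\sqrt{\frac{1}{K}\sum_j V_j}\to 0$ yields convergence in probability of $\frac{1}{K}\sum_j V_j$, not convergence in $L^1$; convergence in probability happens to be all the Chebyshev step needs, but the inference as written is incorrect.

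The paper's proof avoids the joint conditioning entirely. It splits the remainder fold by fold into terms $\tilde F_j$ and $F_j$, each centered at $\E_X[f^{(j)}(X)]-\E[\bar f(X)]$ and analyzed conditional on $f^{(j)}$ \emph{alone}, and it controls the sum over folds through the truncation $\psi(x)=\min(1,x)$: subadditivity gives $\E[\psi(|\sum_j F_j|)]\le\sum_j\E[\psi(|F_j|)]$, and concavity, Jensen, and a conditional second-moment bound reduce each summand to $\E[\min(K,\sqrt{K\,\Var_X(f^{(1)}(X)-\bar f(X))})]$, which vanishes under Assumption \ref{ass:stability}. This truncation device is what extracts negligibility of the remainder from the weak hypothesis $\E\sqrt{KV_1}\to 0$ without ever conditioning a fold's data on models trained with that fold. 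If you want to salvage your route, decompose $R_n=\sum_j(A_j-B_j)$ per fold, apply Chebyshev to each $A_j-B_j$ conditional on $f^{(j)}$ only, and union-bound over folds; for fixed $K$ this works under Assumption \ref{ass:stability}, but it is a different argument from the one you wrote and degrades when $K$ is allowed to grow.
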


With this, inference on $\theta^*$ is now straightforward as long as we can estimate the asymptotic variance consistently. We will discuss strategies for doing so in Section \ref{sec:variance_est}.

\begin{cor}[Inference for the mean via cross-prediction]
\label{cor:mean_inf}
Let $\theta^*$ be the mean outcome, $\theta^* = \E[Y]$.
Assume the conditions of Theorem \ref{thm:clt_mean}, and suppose that we have estimators $\hat\sigma^2 \stackrel{p}{\to} \bar \sigma^2$ and $\hat\sigma_\Delta^2 \stackrel{p}{\to} \bar \sigma_\Delta^2$. Let
$$\C_\alpha^+ = \left(\hat\theta^+ \pm z_{1- \alpha/2} \frac{\sqrt{\frac n N \hat\sigma^2 + \hat\sigma_\Delta^2}}{\sqrt{n}}\right).$$
Then,
$$\liminf_{n,N} ~\P\left(\theta^* \in \C_\alpha^+\right) \geq 1-\alpha.$$
\end{cor}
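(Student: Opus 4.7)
The corollary is essentially a studentization argument on top of Theorem~\ref{thm:clt_mean}, so the strategy is to rewrite $\theta^* \in \C_\alpha^+$ as a statement about a pivotal quantity and then deduce asymptotic coverage via Slutsky's theorem and the continuous mapping theorem. Concretely, I would first note that membership in $\C_\alpha^+$ is equivalent~to
\begin{equation}
\left| \frac{\sqrt{n}\,(\hat\theta^+ - \theta^*)}{\sqrt{\tfrac{n}{N}\hat\sigma^2 + \hat\sigma_\Delta^2}} \right| \leq z_{1-\alpha/2},
\end{equation}
so it suffices to show that the left-hand side converges in distribution to $|Z|$ with $Z \sim \mathcal N(0,1)$.

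\textbf{Step 1: apply the CLT to the oracle-variance pivot.} By Theorem~\ref{thm:clt_mean}, the quantity
\begin{equation}
T_n := \frac{\sqrt{n}\,(\hat\theta^+ - \theta^*)}{\sqrt{\tfrac{n}{N}\bar\sigma^2 + \bar\sigma_\Delta^2}}
\end{equation}
converges in distribution to $\mathcal N(0,1)$. Here I use that the hypotheses of Theorem~\ref{thm:clt_mean} are exactly assumed in the corollary statement, including that $n/N$ has a limit and that $\bar\sigma^2,\bar\sigma_\Delta^2$ have nonzero limits, which ensures the denominator is bounded away from zero in the limit.

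\textbf{Step 2: the plug-in ratio tends to one in probability.} Let $c$ denote the limit of $n/N$. Since $\hat\sigma^2 \cp \bar\sigma^2$ and $\hat\sigma_\Delta^2 \cp \bar\sigma_\Delta^2$, and the limits of $\bar\sigma^2, \bar\sigma_\Delta^2$ are strictly positive by assumption, the continuous mapping theorem applied to $(u,v) \mapsto \sqrt{cu + v}$ gives
\begin{equation}
R_n := \frac{\sqrt{\tfrac{n}{N}\bar\sigma^2 + \bar\sigma_\Delta^2}}{\sqrt{\tfrac{n}{N}\hat\sigma^2 + \hat\sigma_\Delta^2}} \stackrel{p}{\to} 1.
\end{equation}

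\textbf{Step 3: Slutsky and coverage.} The studentized statistic in the pivot above equals $T_n \cdot R_n$. Slutsky's theorem yields $T_n R_n \cd \mathcal N(0,1)$, and since the absolute-value map is continuous, $|T_n R_n| \cd |Z|$. Because $z_{1-\alpha/2}$ is a continuity point of the distribution function of $|Z|$, the Portmanteau theorem gives
\begin{equation}
\lim_{n,N} \P\!\left(|T_n R_n| \leq z_{1-\alpha/2}\right) = \P\!\left(|Z| \leq z_{1-\alpha/2}\right) = 1 - \alpha,
\end{equation}
which implies the desired $\liminf_{n,N} \P(\theta^* \in \C_\alpha^+) \geq 1-\alpha$ (with equality, in fact).

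\textbf{Main obstacle.} There is really no deep obstacle, since Theorem~\ref{thm:clt_mean} does the hard work; the only care needed is to verify that the denominator ratio $R_n$ is well-defined and well-behaved asymptotically, which is precisely why the hypothesis that $\bar\sigma^2$ and $\bar\sigma_\Delta^2$ have \emph{nonzero} limits is invoked. If one wanted to relax that and allow $\bar\sigma^2 \to 0$, one would have to argue separately that $\hat\sigma^2$ does not blow up and that $\bar\sigma_\Delta^2$ remains bounded away from zero; but under the stated hypotheses the argument is a clean Slutsky application.
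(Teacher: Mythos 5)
Your proposal is correct and matches the paper's own argument: the paper simply states that Corollary~\ref{cor:mean_inf} ``follows by a direct application of Theorem~\ref{thm:clt_mean}, together with Slutsky's theorem,'' which is exactly the studentization-plus-Slutsky route you spell out. Your version just makes explicit the ratio $R_n \to 1$ step and the role of the nonzero-limit hypothesis on $\bar\sigma^2, \bar\sigma_\Delta^2$, which the paper leaves implicit.
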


\noindent Per standard notation, $z_{1- \alpha/2}$ denotes the $(1-\alpha/2)$-quantile of the standard normal distribution. Corollary~\ref{cor:mean_inf} follows by a direct application of Theorem \ref{thm:clt_mean}, together with Slutsky's theorem.

\section{Inference for general M-estimation}

We generalize the principle introduced in Section \ref{sec:inference_means} to handle arbitrary M-estimation problems. Indeed, the results presented in this section will strictly subsume the results of Section~\ref{sec:inference_means}. 
% In the general case the estimator $\hat\theta^+$ is no longer guaranteed to have a closed-form expression. Rather, it is implicitly defined through its variational characterization \eqref{eq:crossfitted_est}.

As in the case of the mean, we will require that the predictions are ``stable'' in an appropriate sense. Naturally, the notion of stability will depend on the loss function used to define the M-estimator.

\begin{assumption}
\label{ass:stability_general}
With $\bar f(\cdot)$ as before, we  say that the predictions are stable if for all $\theta$, as $n\to\infty$,
$$\sqrt{K \, \Var\left(\nabla \ell_{\theta}(X,f^{(1)}(X)) - \nabla \ell_{\theta}(X,\bar f(X) ) \mid f^{(1)} \right)} \stackrel{L^1}{\rightarrow}0.$$
 \end{assumption}

Here, $\Var(\cdot \mid f^{(1)})$ denotes the covariance matrix conditional on $f^{(1)}$. Also, for vectors and matrices, by ``$\stackrel{L^1}{\to} 0$'' we mean convergence in mean to zero element-wise.
Notice that by setting $\ell_\theta(y) = (\theta-y)^2$ to be the squared loss, Assumption \ref{ass:stability_general} reduces to Assumption \ref{ass:stability} in the case of mean estimation. As in the case of Assumption \ref{ass:stability}, Assumption \ref{ass:stability_general} should be interpreted as a stability requirement on $\A_{\mathrm{train}}$. Moreover, there is again no requirement of correct specification of $\bar f$.

We will provide two approaches to inference in this section; which one is more appropriate will depend on the inference problem at hand.

One approach will be based on the characterization of $\theta^*$ as a zero of the gradient of the expected loss,
$\nabla L(\theta^*) = \E[\nabla \ell_{\theta^*}(X,Y)] = 0$,
which follows by the convexity of the loss.
In particular, we will construct a confidence set for $\theta^*$ by finding all $\theta$ accepted by a valid test for the null hypothesis that $\nabla L(\theta) = 0$. Since the test is valid and $\theta^*$ satisfies the null condition, the true solution $\theta^*$ will be excluded with small probability.
The hypothesis test for the population gradient $\nabla L(\theta)$ will follow from a central limit theorem for the gradient of the cross-prediction loss,
$$\nabla L^+(\theta) = \frac{1}{K N} \sum_{j=1}^K \sum_{i=1}^N  \nabla \tilde \ell_{\theta,i}^{f^{(j)}} - \frac{1}{n} \sum_{j=1}^K \sum_{i\in I_j}( \nabla \ell_{\theta,i}^{f^{(j)}} - \nabla \ell_{\theta,i} ).$$

The other approach will be based on showing asymptotic normality of the cross-prediction estimator. For this, we build on the proof of asymptotic normality of the prediction-powered estimator (with a pre-trained model) \cite{angelopoulos2023ppipp}, which in turn builds on classical asymptotic normality of M-estimators \cite{vandervaart1998asymptotic}.
The asymptotic normality will allow forming standard CLT intervals around $\hat\theta^+$.

We implicitly assume mild regularity on the losses $\ell_{\theta}(x,y)$ and $\ell_{\theta}(x,f^{(j)}(x))$, in particular that they are differentiable and locally Lipschitz around $\theta^*$ for all possible $f^{(j)}$ (see Def. A.1 in \cite{angelopoulos2023ppipp}). Our second inference approach will require the usual condition that $\hat\theta^+$ is consistent, $\hat\theta^+ \stackrel{p}{\to} \theta^*$; this is satisfied quite broadly, e.g. when the parameter space is compact or when $L^+(\theta)$ is convex. The latter holds for all generalized linear models, for example. See \cite{vandervaart1998asymptotic,angelopoulos2023ppipp} for further discussion.
 
Theorem \ref{thm:clt_general} states the main technical result of this section, which extends Theorem \ref{thm:clt_mean} to general M-estimation problems.

\begin{theorem}[Cross-prediction CLT]
\label{thm:clt_general}
Suppose that the predictions are stable (Ass. \ref{ass:stability_general}). Further, assume that $\frac n N$ has a limit, and that $\bar \Sigma_\theta = \Var(\nabla \ell_{\theta,i}^{\bar f})$ and $\bar \Sigma_{\Delta,\theta} = \Var(\nabla \ell_{\theta,i}^{\bar f} - \nabla \ell_{\theta,i})$ have a nonzero limit. Denote $\bar V_{\theta} = \frac n N \bar\Sigma_\theta + \bar \Sigma_{\Delta,\theta}$. Then,
$$\sqrt{n} \bar V_{\theta}^{-1/2} \left(\nabla L^+(\theta) - \nabla L(\theta) \right) \cd \mathcal N(0, I).$$
If, additionally, the Hessian $H_{\theta^*} = \nabla^2 L(\theta^*)$ is non-singular, $\hat\theta^+ \stackrel{p}{\to} \theta^*$, and $K$ is constant, then
$$\sqrt{n} \bar \Sigma^{-1/2} \left(\hat\theta^+ - \theta^* \right) \cd \mathcal N(0, I),$$
where $\bar \Sigma = H_{\theta^*}^{-1}\bar V_{\theta^*} H_{\theta^*}^{-1}$. 
\end{theorem}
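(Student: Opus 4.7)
The plan is to prove the two claims in sequence. The gradient CLT---the novel piece---relies on a decomposition that isolates a ``population'' version of $\nabla L^+(\theta)$ obtained by replacing each trained model $f^{(j)}$ by the deterministic $\bar f$; the stability assumption then shows the replacement error is $o_p(n^{-1/2})$. The estimator CLT follows from the gradient CLT by a standard M-estimator linearization, following the argument developed in \citet{angelopoulos2023ppipp}.

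Write $\nabla L^+(\theta) - \nabla L(\theta) = M(\theta) + E(\theta)$, where
\begin{equation*}
M(\theta) = \frac{1}{N}\sum_{i=1}^N \bigl(\nabla\ell_\theta(\tilde X_i, \bar f(\tilde X_i)) - \mu_W\bigr) - \frac{1}{n}\sum_{i=1}^n \bigl(\nabla\ell_\theta(X_i, \bar f(X_i)) - \nabla\ell_\theta(X_i, Y_i) - (\mu_W - \nabla L(\theta))\bigr),
\end{equation*}
with $\mu_W := \E[\nabla\ell_\theta(X, \bar f(X))]$, and the ``plug-in error'' is $E(\theta) = \frac{1}{K}\sum_{j=1}^K\bigl[\frac{1}{N}\sum_{i=1}^N\alpha_j(\tilde X_i) - \frac{K}{n}\sum_{i\in I_j}\alpha_j(X_i)\bigr]$ with $\alpha_j(x) := \nabla\ell_\theta(x, f^{(j)}(x)) - \nabla\ell_\theta(x, \bar f(x))$. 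Because $\bar f$ is deterministic, $M(\theta)$ is a sum of two independent averages of i.i.d.\ mean-zero random vectors---one over the unlabeled sample with variance $\bar\Sigma_\theta/N$, one over the labeled sample with variance $\bar\Sigma_{\Delta,\theta}/n$. The multivariate Lindeberg--L\'evy CLT applied to each, combined with independence and Slutsky's theorem, yields $\sqrt{n}\,\bar V_\theta^{-1/2}\,M(\theta) \cd \mathcal N(0, I)$ using the postulated limits.

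The main work is to show $\sqrt{n}\,E(\theta) \stackrel{p}{\to} 0$. The crucial observation is that, conditional on $f^{(j)}$, both the unlabeled and the labeled inner averages in the $j$th summand of $E(\theta)$ have the \emph{same} mean $\mu_j := \E[\alpha_j(X) \mid f^{(j)}]$, because $\tilde X_i$ and the out-of-fold $X_i$ with $i \in I_j$ are independent of $f^{(j)}$ and marginally distributed as $X$. Hence these conditional biases cancel in the difference, and what survives is controlled purely by the conditional variance of $\alpha_j$: the two inner averages have conditional variances $\Var(\alpha_j(X)\mid f^{(j)})/N$ and $K\,\Var(\alpha_j(X)\mid f^{(j)})/n$. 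A Jensen bound on $\Var(E(\theta) \mid \{f^{(k)}\}_k)$, combined with the exchangeability of the folds, yields $n\,\Var(E(\theta) \mid \{f^{(k)}\}_k) \leq \bigl(\tfrac{n}{KN} + 1\bigr)\,K\,\Var(\alpha_1(X) \mid f^{(1)})$ in expectation. Assumption \ref{ass:stability_general} gives $K\,\Var(\alpha_1(X) \mid f^{(1)}) \stackrel{p}{\to} 0$, so applying Chebyshev conditionally and then taking unconditional expectations via dominated convergence produces $\sqrt{n}\,E(\theta) \stackrel{p}{\to} 0$. A final Slutsky argument on the decomposition $\nabla L^+(\theta) - \nabla L(\theta) = M(\theta) + E(\theta)$ establishes the first claim.

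For the estimator CLT, use the first-order condition $\nabla L^+(\hat\theta^+) = 0$ together with $\nabla L(\theta^*) = 0$. A Taylor expansion of $\nabla L^+$ around $\theta^*$ gives $0 = \nabla L^+(\theta^*) + \widehat H(\hat\theta^+ - \theta^*)$, where $\widehat H$ is a Hessian of $L^+$ evaluated at some intermediate point on the segment from $\theta^*$ to $\hat\theta^+$. Consistency $\hat\theta^+ \stackrel{p}{\to} \theta^*$, the implicit local-Lipschitz regularity on the losses, and a stability-type argument for the second derivatives give $\widehat H \stackrel{p}{\to} H_{\theta^*}$, which is invertible by hypothesis; this step parallels the argument in \citet{angelopoulos2023ppipp}. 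Rearranging and plugging in the first part of the theorem at $\theta = \theta^*$ yields $\sqrt{n}(\hat\theta^+ - \theta^*) = -\widehat H^{-1}\sqrt{n}\,\nabla L^+(\theta^*) + o_p(1) \cd \mathcal N(0, H_{\theta^*}^{-1}\bar V_{\theta^*} H_{\theta^*}^{-1}) = \mathcal N(0, \bar\Sigma)$ via Slutsky. The main obstacle is the bias-cancellation step in the control of $E(\theta)$: because $\nabla\ell_\theta(x, \cdot)$ is generally nonlinear in its second argument and $\bar f$ need not be correctly specified, the individual conditional biases $\mu_j$ do not vanish, and it is only the out-of-fold independence plus the matching marginal distribution of features on the unlabeled and labeled sides that makes them cancel, so that the conditional-variance control provided by Assumption \ref{ass:stability_general} suffices.
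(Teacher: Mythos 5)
Your overall route is the same as the paper's: decompose $\nabla L^+(\theta)-\nabla L(\theta)$ into an ``oracle'' part with every $f^{(j)}$ replaced by $\bar f$ (handled by the Lindeberg CLT and independence of the two samples) plus a replacement error killed by stability, and then pass to $\hat\theta^+$ by an M-estimator linearization. However, there is a concrete gap in your control of $E(\theta)$. You bound $\Var(E(\theta)\mid\{f^{(k)}\}_k)$, conditioning on \emph{all} $K$ models simultaneously. Under that conditioning the in-fold labeled points $X_i$, $i\in I_j$, are no longer distributed as $\P_X$ and are not independent of the conditioning $\sigma$-field, because each such point was used to train $f^{(k)}$ for every $k\neq j$. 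Consequently $\E[\alpha_j(X_i)\mid\{f^{(k)}\}_k]\neq\E[\alpha_j(X)\mid f^{(j)}]$ in general, so the bias cancellation you rely on fails under this conditioning, and the claimed conditional variance $K\,\Var(\alpha_j(X)\mid f^{(j)})/n$ for the labeled inner average is not justified; cross-fold covariance terms also do not vanish. The fix is to treat the folds one at a time: bound $\E\|\sqrt{n}\,E(\theta)\|\leq \frac 1 K\sum_j\E\|\sqrt{n}\,E_j(\theta)\|$ and control each summand via Jensen conditioning \emph{only} on $f^{(j)}$ (under which the points in $I_j$ and the unlabeled points are genuinely i.i.d.\ and the conditional means do cancel). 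This is exactly what the paper's Lemmas A.3--A.4 do, using the subadditive, concave function $\min(1,\cdot)$ in place of the plain $L^1$ norm so that only per-fold conditional variances given $f^{(j)}$ ever appear; your computation that each fold contributes conditional variance $(\frac 1 N+\frac K n)\Var(\alpha_j(X)\mid f^{(j)})$ is correct once the conditioning is repaired, and Assumption \ref{ass:stability_general} then closes the argument.

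The second part also uses a stronger tool than the theorem's hypotheses permit. Writing $0=\nabla L^+(\theta^*)+\widehat H(\hat\theta^+-\theta^*)$ with $\widehat H$ an intermediate-point Hessian of $L^+$ presupposes that $\ell_\theta$ is twice differentiable in $\theta$ and that the first-order condition holds exactly; the paper only assumes differentiability and local Lipschitzness of $\theta\mapsto\ell_\theta$ near $\theta^*$ together with non-singularity of the \emph{population} Hessian $H_{\theta^*}=\nabla^2 L(\theta^*)$, and the theorem is meant to cover losses such as the pinball loss for which your expansion is unavailable. The paper instead runs the quadratic-expansion-of-the-objective argument of van der Vaart's Theorem 5.23 (as adapted by \citet{angelopoulos2023ppipp}): expand $n(L^+(\theta^*+h_n/\sqrt n)-L^+(\theta^*))$ for bounded sequences $h_n$, compare the expansion at $h_n^*=\sqrt n(\hat\theta^+-\theta^*)$ with the expansion at the candidate influence-function value, and use the minimizing property of $\hat\theta^+$ plus positive definiteness of $H_{\theta^*}$ to conclude $h_n^*$ equals that value up to $o_P(1)$. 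Your version is fine for smooth losses (e.g.\ GLMs) but does not prove the theorem at the stated level of generality, and the claim $\widehat H\stackrel{p}{\to}H_{\theta^*}$ would itself need a separate uniform-convergence and stability argument that you only gesture at.
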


Theorem \ref{thm:clt_general} immediately yields two methods for computing a confidence set for $\theta^*$, as stated below.

% yields the construction of a test for the hypothesis that $\nabla L(\theta) = 0$. Inverting the test gives a confidence set for $\theta^*$, as stated below.

\begin{cor}[Inference via cross-prediction]
\label{cor:confint_general}
Suppose that we have estimators $\hat\Sigma \stackrel{p}{\to} \bar \Sigma$ and $\hat V_{\theta} \stackrel{p}{\to} \bar V_{\theta}$, for all~$\theta$. Then, assuming the conditions of Theorem \ref{thm:clt_general}, for either
$$\C_\alpha^+ = \left\{\theta:   \left\|\hat V_{\theta}^{-1/2} \nabla L^+(\theta) \right\|^2 \leq \frac{\chi^2_{d,1-\alpha}}{n} \right\} \quad \text{ or } \quad \C_\alpha^+ = \left(\hat\theta_i^+ \pm z_{1-\alpha/(2d)} \sqrt{\frac{\hat\Sigma_{ii}}{n}},\right)_{i=1}^d,$$
we have
$$\liminf_{n,N} ~\P\left(\theta^* \in \C_\alpha^+\right) \geq 1-\alpha.$$
\end{cor}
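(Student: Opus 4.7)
Both confidence sets follow directly from Theorem \ref{thm:clt_general} combined with Slutsky's theorem and the continuous mapping theorem; I would handle the two constructions separately.

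For the gradient-based set, I would instantiate the first conclusion of Theorem \ref{thm:clt_general} at $\theta=\theta^*$. Since $\theta^*$ minimizes the convex differentiable loss $L$, first-order optimality gives $\nabla L(\theta^*)=0$, so
$$\sqrt{n}\,\bar V_{\theta^*}^{-1/2}\,\nabla L^+(\theta^*) \cd \mathcal N(0, I_d).$$
The nonzero-limit hypothesis on $\bar V_\theta$ ensures $\bar V_{\theta^*}$ is ultimately positive definite, so $\hat V_{\theta^*} \stackrel{p}{\to} \bar V_{\theta^*}$ implies $\hat V_{\theta^*}^{-1/2}\stackrel{p}{\to}\bar V_{\theta^*}^{-1/2}$ by the continuous mapping theorem. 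Slutsky then gives $\sqrt{n}\,\hat V_{\theta^*}^{-1/2}\,\nabla L^+(\theta^*) \cd \mathcal N(0, I_d)$, and one more application of the continuous mapping theorem to $\|\cdot\|^2$ yields $n\,\|\hat V_{\theta^*}^{-1/2}\,\nabla L^+(\theta^*)\|^2 \cd \chi^2_d$. Since $\theta^*\in\C_\alpha^+$ is exactly the event that this quantity is at most $\chi^2_{d,1-\alpha}$, the asymptotic coverage statement follows by the definition of the chi-squared quantile.

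For the coordinate-wise intervals, I would use the second conclusion of Theorem \ref{thm:clt_general}: from $\sqrt{n}\,\bar\Sigma^{-1/2}(\hat\theta^+-\theta^*)\cd \mathcal N(0,I_d)$ it follows in particular that for each $i$, $\sqrt{n}(\hat\theta^+_i-\theta^*_i)/\sqrt{\bar\Sigma_{ii}}\cd \mathcal N(0,1)$. Slutsky applied with $\hat\Sigma_{ii}\stackrel{p}{\to}\bar\Sigma_{ii}$ replaces the denominator by $\sqrt{\hat\Sigma_{ii}}$, so each marginal interval covers $\theta^*_i$ with asymptotic probability $1-\alpha/d$. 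A union bound over the $d$ coordinates then gives joint asymptotic coverage at least $1-\alpha$. No real obstacle is anticipated since all the heavy lifting sits inside Theorem \ref{thm:clt_general}; the only bookkeeping is verifying positive-definiteness of the limiting covariances so that their inverse square roots exist and are continuous functions of the estimators — guaranteed by the nonzero-limit hypothesis on $\bar V_\theta$ and, for the second part, by non-singularity of $H_{\theta^*}$ via the sandwich form $\bar\Sigma = H_{\theta^*}^{-1}\bar V_{\theta^*}H_{\theta^*}^{-1}$.
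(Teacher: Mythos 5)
Your proposal is correct and matches the paper's (implicit) argument: the paper states that Corollary \ref{cor:confint_general} follows immediately from Theorem \ref{thm:clt_general}, and the route you spell out---$\nabla L(\theta^*)=0$ by first-order optimality, Slutsky with the consistent variance estimators, continuous mapping to get the $\chi^2_d$ limit, and a Bonferroni union bound for the coordinate-wise intervals---is exactly the intended derivation.
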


Above, $\chi^2_{d,1-\alpha}$ is the $(1- \alpha)$-quantile of the chi-squared distribution with $d$ degrees of freedom; when $d=1$ (as in the case of mean estimation), $\chi_{d,1-\alpha}$ is equal to $z_{1-\alpha/2}$. Note also that in the case of mean estimation, the two confidence sets are identical and reduce to the set from Corollary \ref{cor:mean_inf}. In the second confidence set we apply a Bonferroni correction over the $d$ coordinates of the estimand for simplicity and clarity of exposition; we can obtain an asymptotically exact $(1-\alpha)$-confidence set as $\C_\alpha^+ = \left\{\hat\theta^+ + v :  v^\top \hat\Sigma v \leq \frac{\chi^2_{d,1-\alpha}}{n}  \right\}$.

Next, we apply Theorem \ref{thm:clt_general} and Corollary \ref{cor:confint_general} to concrete problems---quantile estimation, linear regression, and generalized linear models---to get explicit confidence interval constructions.

\subsection{Example: quantile estimation}
\label{sec:quantile}

Assume we are interested in a quantile of $Y$,  
$$\theta^* = \inf\left\{y: \P(Y\leq y)\geq q \right\}.$$
The quantile can equivalently be written as any minimizer of the pinball loss, 
$$\theta^* = \argmin_\theta \E[\ell_\theta(Y)] = \argmin_\theta \E[q(Y-\theta)\mathbf{1}\{Y>\theta\} + (1-q)(\theta-Y)\mathbf{1}\{Y\leq\theta\}].$$
The subgradient of the pinball loss is equal to $\nabla \ell_\theta(y) = - q\mathbf{1}\{y>\theta\} + (1-q)\mathbf{1}\{y\leq \theta\} = -q + \mathbf{1}\{y\leq \theta\}$. Plugging this expression into the first confidence set from Corollary \ref{cor:confint_general} yields
$$\C_\alpha^+ = \left\{ \theta: \left|\tilde F^+(\theta) - \Delta^+(\theta)  - q \right| \leq z_{1- \alpha/2} \frac{\sqrt{\frac n N \hat\sigma_\theta^2 + \hat \sigma_{\Delta, \theta}^2 }}{\sqrt n} \right\},$$
where $\tilde F^+(\theta) = \frac{1}{K N} \sum_{j=1}^K \sum_{i=1}^N \mathbf{1}\{f^{(j)}(\Xt_i)\leq\theta\}$ is the average empirical CDF of the predictions on the unlabeled data, and $\Delta^+(\theta) = \frac 1 n \sum_{j=1}^K \sum_{i\in I_j} (\mathbf{1}\{f^{(j)}(X_i) \leq \theta\} - \mathbf{1}\{Y_i \leq \theta\})$ is the difference between the empirical CDFs of the predictions and true outcomes on the labeled data. The standard errors are equal to $\bar\sigma^2_\theta = \Var(\mathbf{1}\{\bar f(X) \leq \theta\})$ and $\bar\sigma^2_{\Delta,\theta} = \Var(\mathbf{1}\{ \bar f(X) \leq \theta \} - \mathbf{1}\{ Y \leq \theta \})$. The confidence set $\C_\alpha^+$ thus consists of all values $\theta$ such that the average predicted CDF $\tilde F^+(\theta)$, corrected by the bias $\Delta^+(\theta)$, is close to the target level  $q$.

\subsection{Example: linear regression}

In linear regression, the target of inference is defined by
\begin{equation}
\label{eq:linear_reg}
\theta^* = \argmin_\theta \E[\ell_\theta(X,Y)] = \argmin_\theta \frac 1 2 \E[(Y-X^\top \theta)^2].
\end{equation}
In this case, the cross-prediction estimator, equal to the solution to $\nabla L^+(\hat\theta^+)=0$, has a closed-form expression. Letting $\tilde \Xbb\in \R^{N\times d}$ (resp.~$\Xbb\in\R^{n\times d}$) be the unlabeled (resp.~labeled) data matrix, $\Ybb \in\R^n$ be the vector of labeled outcomes, the solution is given by
\begin{equation*}
\hat\theta^+ = (\tilde{\Xbb}^\top\tilde{\Xbb})^{-1}\left(\tilde \Xbb^\top  f_{\mathrm{avg(K)}}(\tilde \Xbb)  - \frac N n \cdot \Xbb^\top \left(f_{1:K}(\Xbb) - \Ybb \right) \right),
\end{equation*}
where $f_{\mathrm{avg(K)}}(\tilde \Xbb) = \frac 1 K \sum_{j=1}^K f^{(j)}(\tilde \Xbb)$ is the vector of average predictions on the unlabeled data, and $f_{1:K}(\Xbb)$ is the vector of predictions on the labeled data: $f_{1:K}(\Xbb) = (f^{(1)}(X_1),\dots,f^{(1)}(X_{\frac n K}),f^{(2)}(X_{\frac n K +1}),\dots,f^{(K)}(X_n))$. We see that $\hat\theta^+$ resembles the usual least-squares estimator with $f_{\mathrm{avg(K)}}(\tilde \Xbb)$ as the response, except for the extra debiasing factor, $\frac N n \cdot \Xbb^\top \left(f_{1:K}(\Xbb) - \Ybb \right)$, that takes into account the prediction inaccuracies.

Instantiating the relevant terms, Theorem \ref{thm:clt_general} implies that $\hat\theta^+$ is asymptotically normal with covariance equal to $\Sigma_{\mathrm{OLS}} = H^{-1} \bar V_{\theta^*} H^{-1}$, where $H = \E[XX^\top]$ and $\bar V_{\theta^*} = \frac{n}{N} \bar\Sigma_{\theta^*} + \bar \Sigma_\Delta$, for
$\bar \Sigma_{\theta^*} = \Var((\bar f(X) - X^\top \theta^*)X)$ and $\bar \Sigma_{\Delta} = \Var((\bar f(X) - Y)X)$.

For a given coordinate of interest $i$, a confidence interval for $\theta^*_i$ can therefore be obtained as
$$\C_\alpha^+ = \left(\hat\theta^+_i \pm z_{1-\alpha/2} \frac{\sqrt{(\Sigmaolshat)_{ii}}}{\sqrt{n}}\right),$$
given an estimate $\Sigmaolshat$ of $\Sigmaols$.

\subsection{Example: generalized linear models}

We can generalize the previous example by considering all generalized linear models (GLMs). In particular, we consider targets of inference given by
\begin{equation}
\label{eq:glms}
\theta^* = \argmin_\theta \E[-\log p_{\theta}(Y | X)] = \argmin_\theta \E[-Y\theta^\top X + \psi(X^\top \theta)],
\end{equation}
where $p_{\theta}(y|x) = \exp(yx^\top \theta-\psi(x^\top\theta))$ is the probability density of the outcome given the features and the log-partition function $\psi$ is convex. The objective \eqref{eq:glms} recovers the linear-regression objective \eqref{eq:linear_reg} by setting $\psi(s) = \frac 1 2 s^2$. It captures logistic regression by choosing $\psi(s)=\log(1+e^s)$.

The asymptotic covariance given by Theorem \ref{thm:clt_general} evaluates to $\Sigma_{\mathrm{GLM}} =  H_{\theta^*}^{-1} \bar V_{\theta^*}  H_{\theta^*}^{-1}$, $H_{\theta^*} = \E[\psi''(X^\top \theta^*)X X^\top]$, $\bar V_{\theta^*} = \frac n N \Var((\psi'(X^\top \theta^*) - \bar f(X)) X) + \Var((\bar f(X) - Y)X)$. Therefore, analogously to the OLS case, given an estimate $\hat\Sigma_{\mathrm{GLM}}$ of $\Sigma_{\mathrm{GLM}}$, we can construct a confidence interval for $\hat\theta^+$ as
$$\C_\alpha^+ = \left(\hat\theta^+_i \pm z_{1-\alpha/2} \frac{\sqrt{(\hat\Sigma_{\mathrm{GLM}})_{ii}}}{\sqrt{n}}\right).$$

\section{Variance estimation via bootstrapping}
\label{sec:variance_est}

The previous inference results rely on being able to estimate the asymptotic covariance of $\hat \theta^+$. We herewith provide an explicit estimation strategy that we will use in our experiments.

Recall that that the asymptotic covariance is equal to $\bar \Sigma = H_{\theta^*}^{-1}\bar V_{\theta^*} H_{\theta^*}^{-1}$, where
$\bar V_{\theta} = \frac n N \bar\Sigma_\theta + \bar \Sigma_{\Delta,\theta}$, for $\bar \Sigma_{\theta} = \Var(\nabla \ell_{\theta,i}^{\bar f})$ and $\bar \Sigma_{\Delta,\theta} = \Var(\nabla \ell_{\theta,i}^{\bar f} - \nabla \ell_{\theta,i})$.
Estimating the Hessian $H_{\theta}$ is easy via plug-in estimation; $\bar V_{\theta}$, on the other hand, depends on the average model $\bar f$. If the average model $\bar f$ was known, one could compute estimates of $\bar \Sigma_{\theta}$ and $\bar \Sigma_{\Delta,\theta}$ by replacing the true covariances with their empirical counterparts. Thus, the challenge is to approximate $\bar f$. To achieve this, we apply the bootstrap to simulate multiple model training runs, and at the end we average the predictions of all the learned models.

In more detail, for each $b \in \{1, 2, \ldots, B\} = [B]$, we sample $n-\frac n K$ data points uniformly at random from the labeled dataset, and denote the indices of the samples by $I_{\mathrm{boot}}^b$. Then, we use the sampled data points to train a model $f_{\mathrm{boot}}^{(b)}$ using the same model-fitting strategy as for the cross-prediction models $f^{(j)}$. To estimate $\bar \Sigma_{\theta}$, we compute
$$\hat \Sigma_{\theta} = \widehat\Var\left(\nabla \ell_{\theta}(\tilde X_i,\bar f_{\mathrm{boot}}(\tilde X_i)), i\in [N]\right),$$
where $\bar f_{\mathrm{boot}} =
\frac{1}{B} \sum_{b=1}^B f^{(b)}_{\mathrm{boot}}$ and $\widehat\Var$ denotes the empirical covariance. To estimate $\bar\Sigma_{\Delta,\theta}$, we compute
$$\hat \Sigma_{\Delta,\theta} = \widehat\Var\left(\nabla \ell_{\theta}( X_i, f_{\mathrm{boot}}^{(b)}(X_i)) - \nabla \ell_{\theta}( X_i, Y_i), i\in [n]\setminus I_{\mathrm{boot}}^{(b)}, b\in [B]\right).$$
Finally, we approximate the covariance by $\frac n N \hat\Sigma_{\theta} + \hat \Sigma_{\Delta,\theta}$. In computing $\hat \Sigma_{\Delta,\theta}$, we technically do not average out the bootstrapped models because we want to make sure that every point $(X_i,Y_i)$ used to compute the gradient bias is independent of its corresponding model $f^{(b)}_{\mathrm{boot}}$. Intuitively, as per the discussion following Assumption \ref{ass:stability}, if $\A_{\mathrm{train}}$ is stable we expect $f^{(b)}_{\mathrm{boot}}$ to be a good approximation of the average model $\bar f$, which in turn means that the bootstrap covariance estimates should be consistent per conventional wisdom. We show empirically that the covariance estimates lead to valid coverage across a range of applications.

To give one concrete example, consider mean estimation: $\theta^* = \E[Y]$. We compute 
$$\hat \sigma^2 = \widehat\Var\left(\bar f_{\mathrm{boot}}(\tilde X_i), i\in [N]\right)\text{ and }\hat \sigma_{\Delta}^2 = \widehat\Var\left(f_{\mathrm{boot}}^{(b)}(X_i) - Y_i, i\in [n]\setminus  I_{\mathrm{boot}}^{(b)}, b\in [B]\right),$$
and form the final interval as $\C_\alpha^+ = \left(\hat\theta^+ \pm z_{1-\alpha/2} \frac{\sqrt{\frac n N \hat \sigma^2 + \hat \sigma_{\Delta}^2}}{\sqrt{n}}\right)$.

\section{Experiments}
\label{sec:experiments}

We evaluate cross-prediction and compare it to baseline approaches on several datasets; the baselines are the classical inference method, which only uses the labeled data, and prediction-powered inference with a data-splitting step in order to train a predictive model. Code for reproducing the experiments is available at:  \href{https://github.com/tijana-zrnic/cross-ppi}{\texttt{https://github.com/tijana-zrnic/cross-ppi}}.

For each experimental setting, we plot the coverage and confidence interval width estimated over $100$ trials for all baselines. We also show the constructed confidence intervals for five randomly chosen trials. Finally, to quantify the stability of inferences, we report the standard deviation of the lower and upper endpoints of the confidence intervals for each method.

We begin with proof-of-concept experiments on synthetic data. Then, we move on to more complex real datasets.

\subsection{Proof-of-concept experiments on synthetic data}

To build intuition, we begin with simple experiments on synthetic data. The purpose is to confirm what we expect in theory: (a) as it gets easier to predict labels from features, cross-prediction and prediction-powered inference become more powerful and increasingly outperform the classical approach; (b) cross-prediction uses the data more efficiently than prediction-powered inference, yielding smaller intervals; (c) cross-prediction gives more stable inferences than the baselines when the predictions are useful; (d) all three approaches lead to satisfactory coverage.

In all of the following experiments, we have $N=10,000$ unlabeled data points, and we vary the size of the labeled data $n$ between $100$ and $1000$, in $100$-point increments. We apply cross-prediction with $K=10$ folds. We estimate the variance using the bootstrap approach described in Section \ref{sec:variance_est}, with $B=30$ bootstrap samples. For prediction-powered inference, we use half of the labeled data for model training. To illustrate the point that cross-prediction can be applied with any black-box model, we train gradient-boosted trees via XGBoost \cite{chen2016xgboost} to obtain the models $f^{(j)}$. We use the same model-fitting strategy for prediction-powered inference. We fix the target error level to be $\alpha=0.1$ and average the results over $100$ trials.

\paragraph{Mean estimation.} For given parameters $R^2$ and $\sigma_Y^2$, the data-generating distribution is defined as $X\sim \mathcal N(0,I_2), Y = \mu + X^\top \beta + \xi$, where $\beta_ 1  = \beta_2 =  R\sigma_Y/\sqrt{2}$, and $\xi\sim \mathcal N(0,\sigma_Y^2(1 - R^2))$ is independent of $X$. We fix $\mu = 4, \sigma_Y^2 = 4$ and vary $R^2 = \frac{\Var(X^\top \beta)}{\Var(Y)} \in\{0,0.5,1\}$. The idea is to vary the degree to which the outcomes can be explained through the features: when $R^2 = 0$, the outcome is independent of the features and we do not expect predictions to help, while when $R^2 = 1$, the outcome can be perfectly explained through the features and we expect predictions to be helpful. Given that the variance of $Y$ is kept constant regardless of $R^2$, classical inference has the same distribution of widths across $R^2$. The target of inference is $\theta^* = \E[Y] = \mu$.

In Figure \ref{fig:mean_comparison} we plot the coverage and interval width of cross-prediction, classical inference, and prediction-powered inference, as well as five example intervals. All three methods approximately achieve the target coverage, and cross-prediction dominates prediction-powered inference throughout. Further, we see that the classical approach dominates the alternatives when the features are independent of the outcomes, while the alternatives become more powerful as $R^2$ increases. To evaluate stability, in Table \ref{table:mean} we report the standard deviation of the lower and upper endpoints of the confidence intervals from Figure \ref{fig:mean_comparison}, for $n=100$. We observe that the classical approach is the most stable method when $R^2=0$, which makes sense because the predictions can only introduce noise. When $R^2=0.5$, cross-prediction and classical inference have a similar degree of stability, while when $R^2=1$ cross-prediction is significantly more stable. Moreover, cross-prediction is significantly more stable than prediction-powered inference throughout. These trends hold across different values of $n$, however we only include the results for $n=100$ for simplicity of exposition.

\begin{figure}[t]
\centering
\includegraphics[width = \textwidth]{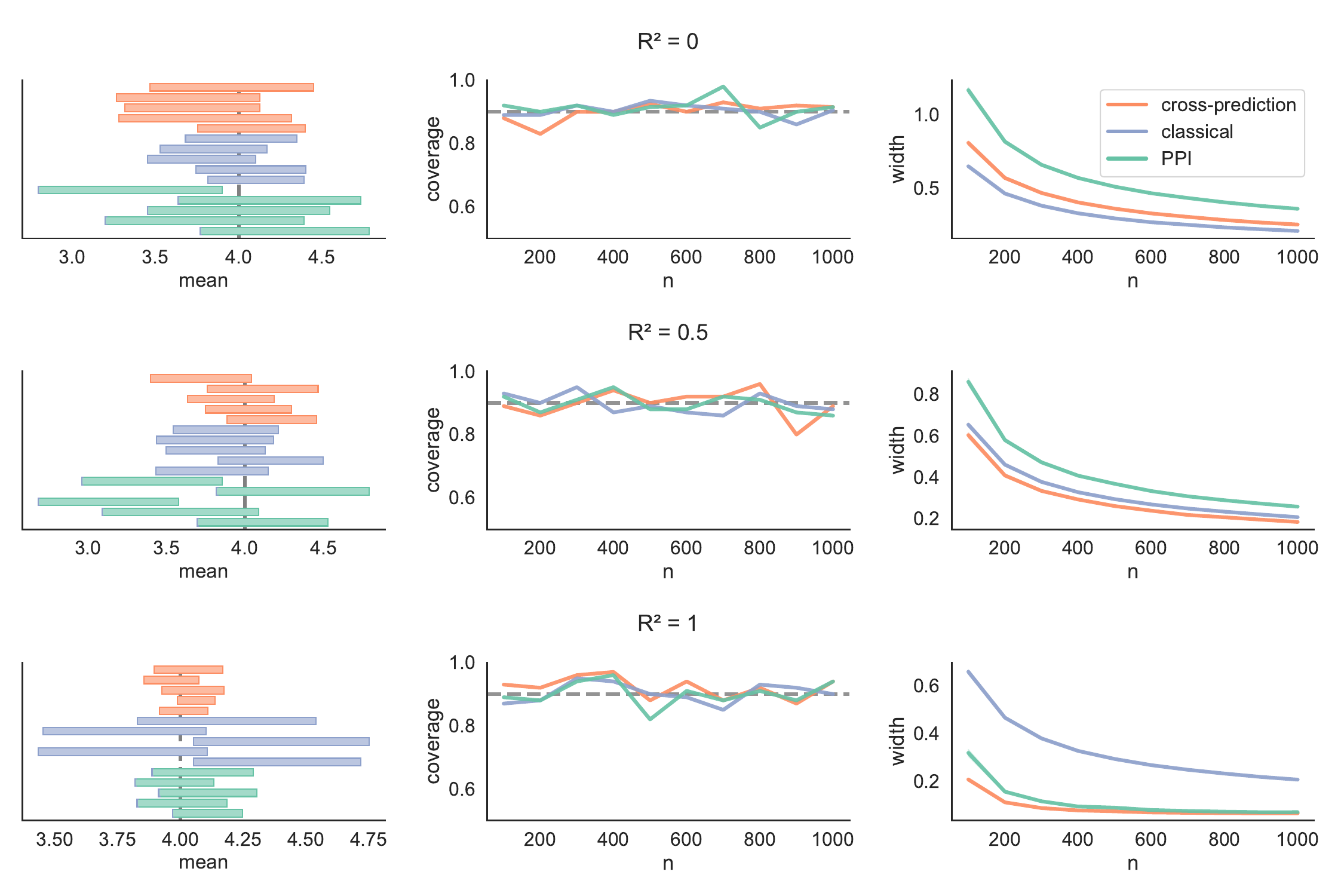}
\caption{\textbf{Mean estimation.} Intervals from five randomly chosen trials (left), coverage (middle), and average interval width (right) of cross-prediction, classical inference, and prediction-powered inference (PPI) in a mean estimation problem. 
}
\label{fig:mean_comparison}
\end{figure}

{\renewcommand{\arraystretch}{1.2}\begin{table}[b!]
\centering
\begin{tabularx}{\textwidth}{ |c| Y Y| Y Y | Y Y| }
\cline{1-7}
\hline
\rowcolor{Gray}
        & \multicolumn{6}{c|}{\textbf{Mean estimation}}\\
        \hhline{|>{\arrayrulecolor{Gray}}->{\arrayrulecolor{black}}------|}
\rowcolor{Gray}
\textbf{}        & \multicolumn{2}{c|}{$\mathbf{R^2 = 0}$}          & \multicolumn{2}{c|}{$\mathbf{R^2 = 0.5}$}             & \multicolumn{2}{c|}{$\mathbf{R^2 = 1}$}                 \\ \hline
\rowcolor{LightGray}
\textbf{Method}  & \multicolumn{1}{c|}{$\hat\sigma_L$} & $\hat\sigma_U$ & \multicolumn{1}{c|}{$\hat\sigma_L$} & $\hat\sigma_U$ & \multicolumn{1}{c|}{$\hat\sigma_L$} & $\hat\sigma_U$ \\ \hline
cross-prediction &      0.2694 & 
0.2696 & 
\textbf{0.1769} & 
0.1897 & 
\textbf{0.0591} & 
\textbf{0.0613}
              \\ 
classical        &    \textbf{0.2124} & 
\textbf{0.2085} & 
0.1908 & 
\textbf{0.1885} & 
0.2136 & 
0.2102
         \\ 
PPI              &   0.3844 & 
0.3997 & 
0.2751 & 0.2684 & 
0.1045 & 
0.1061           \\ \hline
\end{tabularx}
\caption{Standard deviation of the lower ($\hat\sigma_L$) and upper ($\hat\sigma_U$) endpoints of the confidence intervals in the mean estimation problem from Figure \ref{fig:mean_comparison}, for $n=100$. The minimum value in each column is in bold.}
\label{table:mean}
\end{table}}

\begin{figure}[t]
\centering
\includegraphics[width = \textwidth]{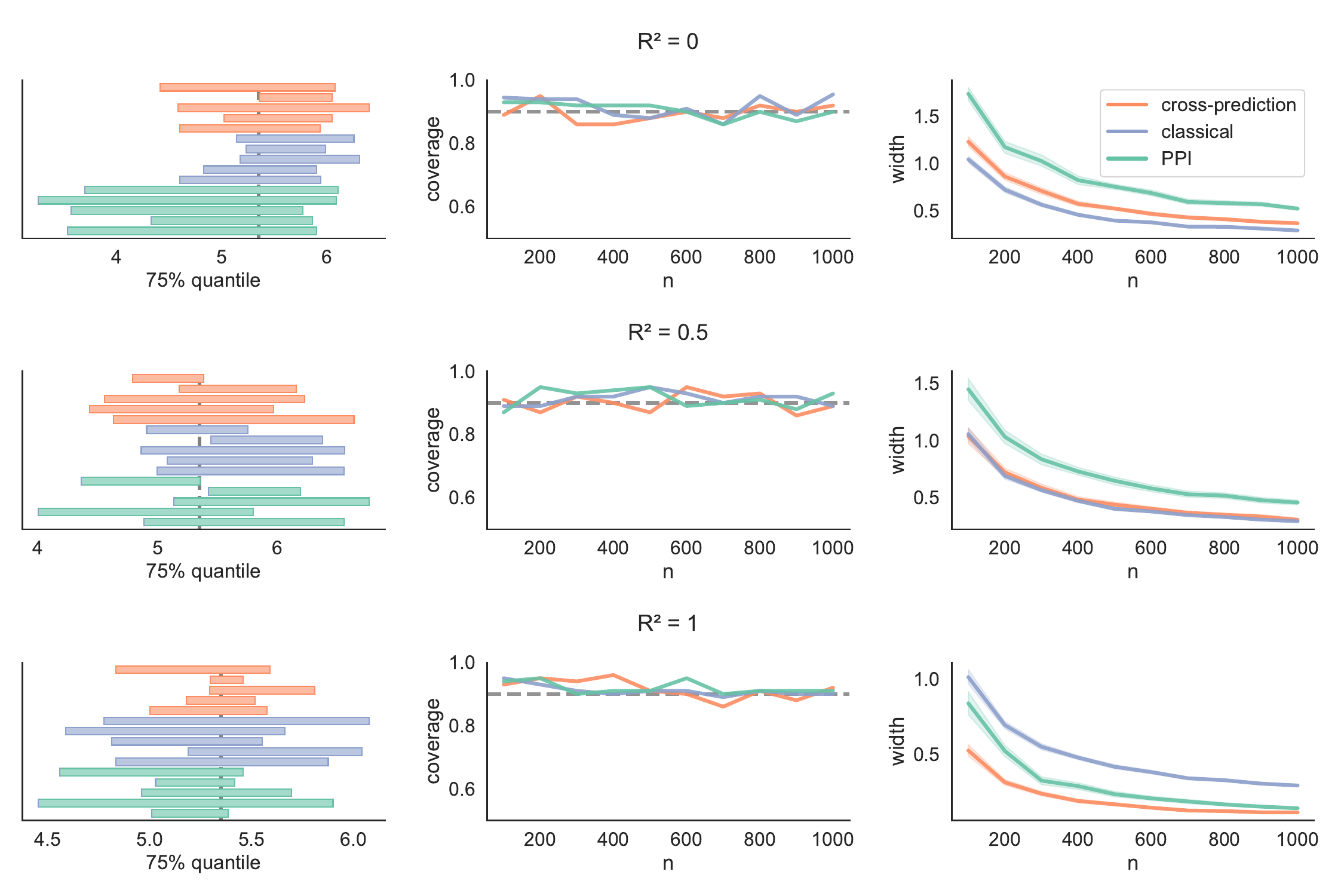}
\caption{\textbf{Quantile estimation.} Intervals from five randomly chosen trials (left), coverage (middle), and average interval width (right) of cross-prediction, classical inference, and prediction-powered inference (PPI) in a quantile estimation problem. The target is the 75th percentile.}
\label{fig:quantile_comparison}
\end{figure}

\begin{figure}[t]
\centering
\includegraphics[width = \textwidth]{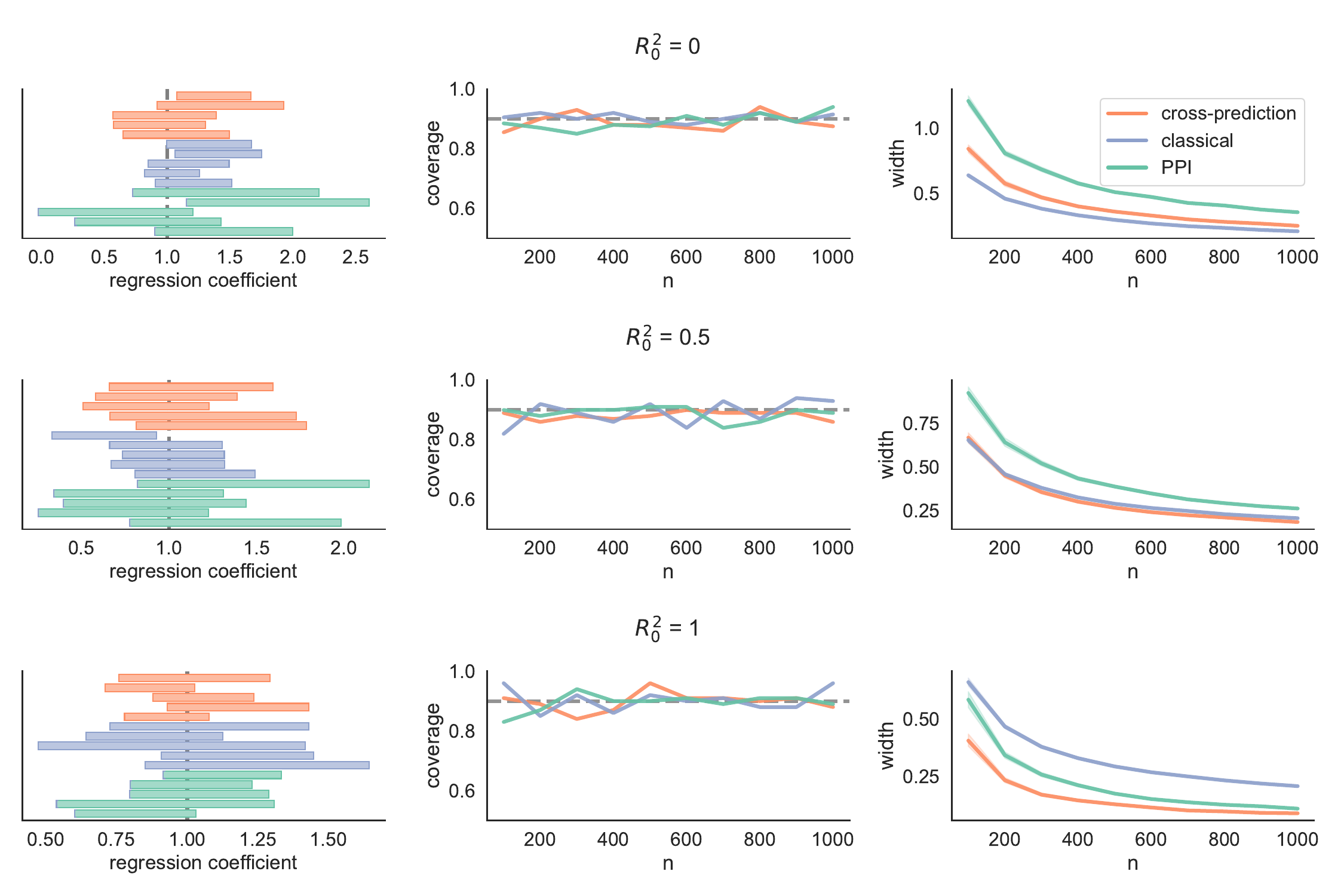}
\caption{\textbf{Linear regression.} Intervals from five randomly chosen trials (left), coverage (middle), and average interval width (right) of cross-prediction, classical inference, and prediction-powered inference (PPI) in a linear regression problem.}
\label{fig:linear_regression}
\end{figure}

{\renewcommand{\arraystretch}{1.2}\begin{table}[b!]
\centering
\begin{tabularx}{\textwidth}{ |c| Y Y| Y Y | Y Y| }
\cline{1-7}
\hline
\rowcolor{Gray}
        & \multicolumn{6}{c|}{\textbf{Quantile estimation}}\\
\hhline{|>{\arrayrulecolor{Gray}}->{\arrayrulecolor{black}}------|}
\rowcolor{Gray}
\textbf{}        & \multicolumn{2}{c|}{$\mathbf{R^2 = 0}$}          & \multicolumn{2}{c|}{$\mathbf{R^2 = 0.5}$}             & \multicolumn{2}{c|}{$\mathbf{R^2 = 1}$}                 \\ \hline
\rowcolor{LightGray}
\textbf{Method}  & \multicolumn{1}{c|}{$\hat\sigma_L$} & $\hat\sigma_U$ & \multicolumn{1}{c|}{$\hat\sigma_L$} & $\hat\sigma_U$ & \multicolumn{1}{c|}{$\hat\sigma_L$} & $\hat\sigma_U$ \\ \hline
cross-prediction &      0.4102 &
0.3509 &
0.3253 &
\textbf{0.3242} & 
\textbf{0.1345} &
\textbf{0.1545}
              \\ 
classical        &    \textbf{0.2302} &
\textbf{0.3024} &
\textbf{0.2569} &
0.3305 &
0.2615 &
0.2806
         \\ 
PPI              &   0.5424 &
0.4614 &
0.4141 &
0.4368 & 
0.2151 &
0.3280           \\ \hline
\end{tabularx}
\caption{Standard deviation of the lower ($\hat\sigma_L$) and upper ($\hat\sigma_U$) endpoints of the confidence intervals in the quantile estimation problem from Figure \ref{fig:quantile_comparison}, for $n=100$. The minimum value in each column is in bold.}
\label{table:quantile}
\end{table}}

{\renewcommand{\arraystretch}{1.2}\begin{table}[b!]
\centering
\begin{tabularx}{\textwidth}{ |c| Y Y| Y Y | Y Y| }
\cline{1-7}
\hline
\rowcolor{Gray}
        & \multicolumn{6}{c|}{\textbf{Linear regression}}\\
        \hhline{|>{\arrayrulecolor{Gray}}->{\arrayrulecolor{black}}------|}
\rowcolor{Gray}
\textbf{}        & \multicolumn{2}{c|}{$\mathbf{R_0^2 = 0}$}          & \multicolumn{2}{c|}{$\mathbf{R_0^2 = 0.5}$}             & \multicolumn{2}{c|}{$\mathbf{R_0^2 = 1}$}                 \\ \hline
\rowcolor{LightGray}
\textbf{Method}  & \multicolumn{1}{c|}{$\hat\sigma_L$} & $\hat\sigma_U$ & \multicolumn{1}{c|}{$\hat\sigma_L$} & $\hat\sigma_U$ & \multicolumn{1}{c|}{$\hat\sigma_L$} & $\hat\sigma_U$ \\ \hline
cross-prediction &      0.2801 &
0.2969 &
\textbf{0.1875} &
\textbf{0.2250} &
\textbf{0.1102} &
\textbf{0.1472}
              \\ 
classical        &    \textbf{0.2091} &
\textbf{0.2098} &
0.2271 &
0.2262 &
0.1800 &
0.1809
         \\ 
PPI              &   0.4104 &
0.4870 &
0.2602 &
0.3326 &
0.1522 &
0.2530           \\ \hline
\end{tabularx}
\caption{Standard deviation of the lower ($\hat\sigma_L$) and upper ($\hat\sigma_U$) endpoints of the confidence intervals in the linear regression problem from Figure \ref{fig:linear_regression}, for $n=100$. The minimum value in each column is in bold.}
\label{table:linear_regression}
\end{table}}

\paragraph{Quantile estimation.} We adopt the same data-generating process as for mean estimation. We only change the target of inference $\theta^*$ to be the 75th percentile of the outcome distribution.

In Figure \ref{fig:quantile_comparison} we plot the coverage and interval width of cross-prediction, classical inference, and prediction-powered inference, as well as five example intervals. We observe a qualitatively similar comparison as in the case of mean estimation: all three methods approximately achieve the target coverage, and cross-prediction dominates prediction-powered inference throughout. As before, the classical approach dominates the alternatives when the features are independent of the outcomes, and the alternatives become increasingly powerful as $R^2$ increases. In Table \ref{table:quantile} we evaluate the stability of the methods by reporting the standard deviation of the lower and upper endpoints of the confidence intervals from Figure \ref{fig:quantile_comparison}, for $n=100$. As before, Table \ref{table:quantile} shows that cross-prediction is more stable than prediction-powered inference for all values of $R^2$, and when $R^2=0$ classical inference is the most stable option. When $R^2=0.5$, cross-prediction has a slightly more stable upper endpoint than classical inference, while classical inference has a more stable lower endpoint. When $R^2=1$, cross-prediction is by far the most stable method. For $R^2\in\{0,0.5\}$. Again, these trends are largely consistent across different values of $n$, however we only include the results for $n=100$ for simplicity.

\begin{figure}[bh!]
\centering
\includegraphics[width = 0.6\textwidth]{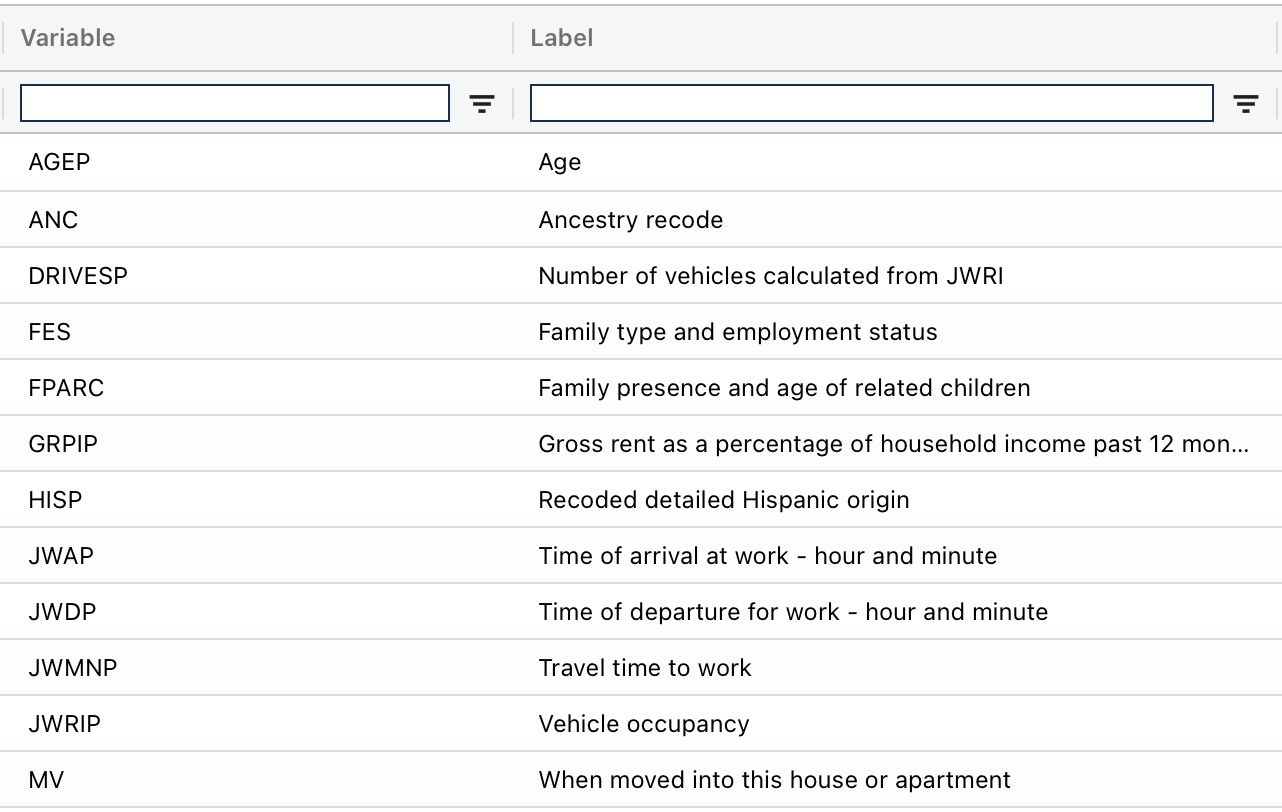}
\caption{Subset of the variables available in the ACS PUMS data.}
\label{fig:census_variables}
\end{figure}

\paragraph{Linear regression.} Finally, we look at linear regression. For robustness and interpretability, it is common to include only a subset of the available features in the regression. The process of deciding which variables to include is known as model selection. The variables that are not included in the model may still be predictive of the outcome of interest; we demonstrate that, as such, they can be useful for inference.

The data-generating distribution is defined as follows: we generate $X\sim\mathcal N(0,I_3)$, $Y= X^\top \beta + \xi$, where $\beta = (1,1,R_0\sigma_Y)$ and $\xi\sim\mathcal N(0,\sigma_Y^2(1 - R_0^2))$. Again, the idea is to vary how much of the outcome can be explained through prediction versus how much of it is exogenous randomness. We fix $\sigma_Y^2 = 4$ and vary $R_0\in\{0,0.5,1\}$. The target of inference is defined as the least-squares solution when regressing $Y$ on $(X_1,X_2)$, that is, the first coordinate of this solution. In this case, this is simply equal to $\theta^* = \beta_1 = 1$.

In Figure \ref{fig:linear_regression} we plot the coverage and interval width of cross-prediction, classical inference, and prediction-powered inference. When $R_0^2=0$, the classical approach outperforms the prediction-based approaches; as $R_0^2$ grows, meaning more of the randomness of the outcome can be attributed to $X_3$, the prediction-based approaches dominate the classical one. As before, cross-prediction yields smaller intervals than prediction-powered inference. We remark that, even though the inference problem posits a linear model, the prediction-based approaches still use XGBoost for model training. Like in the previous two examples, we report on the stability of the three methods in Table \ref{table:linear_regression}. We again fix $n=100$ for simplicity. Cross-prediction is far more stable than prediction-powered inference throughout, and it is more stable than classical inference for nonzero values of $R_0^2$.

\subsection{Estimating deforestation from satellite imagery}
\label{sec:deforestation}

We briefly revisit the problem of deforestation analysis from Section \ref{sec:intro}. As we saw in Figure \ref{fig:deforestation}, cross-prediction gave tighter confidence intervals for the deforestation rate than using gold-standard measurements of deforestation alone. In other words, cross-prediction can enable a reduction in the number of necessary field visits to measure deforestation. Moreover, we saw that cross-prediction outperformed prediction-powered inference.

Here we argue another benefit of cross-prediction in this problem: it is a more stable solution than the baselines.
Table \ref{table:realdata} shows the standard deviation of the endpoints of the confidence intervals constructed by cross-prediction and its competitors. Cross-prediction has a significantly lower variability of the endpoints than both classical inference and prediction-powered inference, while the latter two exhibit similar variability.

Finally, we provide the experimental details that were omitted in Section \ref{sec:intro} for brevity. We have $n_{\mathrm{all}} = 3192$ data points with gold-standard labels total. To simulate having unlabeled images, in each trial we randomly split the data into $n$ points to serve as the labeled data, for varying $n \in \{0.1n_{\mathrm{all}}, 0.2n_{\mathrm{all}}, 0.3n_{\mathrm{all}}\}$, and treat the remaining points as unlabeled. The target of inference is the fraction of deforested areas across the locations contained in the sample.
We apply cross-prediction with $K=10$ folds. For prediction-powered inference, we use $\ntr = 0.1n$ points for model tuning. We average the results over $100$ trials.

\begin{figure}[t]
\centering
\includegraphics[width = \textwidth]{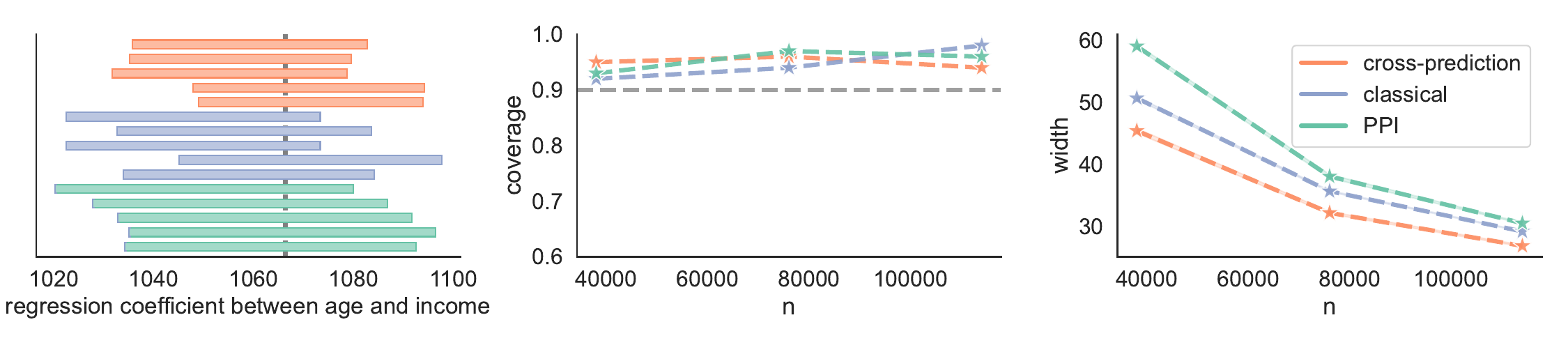}
\caption{\textbf{Estimating the relationship between age, sex, and income in ACS survey data.} Intervals from five randomly chosen trials (left), coverage (middle), and average interval width (right) of cross-prediction, classical inference, and prediction-powered inference (PPI) in a linear regression problem on ACS PUMS data. The target $\theta^*$ is the linear regression coefficient when regressing income on age and~sex.}
\label{fig:census_linear_regression}
\end{figure}

\subsection{Estimating relationships between socioeconomic covariates in survey data}

We evaluate cross-prediction on the American Community Survey (ACS) Public Use Microdata Sample (PUMS). We investigate the relationship between age, sex, and income in survey data collected in California in 2019 ($n_{\mathrm{all}} = 377,575$ people total). High-quality survey data is generally difficult and time-consuming to collect.
With this experiment we hope to demonstrate how, by imputing missing information based on the available covariates, cross-prediction can achieve both powerful and valid inferences while reducing the requisite amount of survey data. See Figure \ref{fig:census_variables} for a subset of the available covariates in the ACS PUMS data.

We use the Folktables \cite{ding2021retiring} interface to download the PUMS data, including income, age, sex, and 15 other demographic covariates. In each trial, we randomly sample $n$ data points to serve as the labeled data, for varying $n$, and treat the remaining data points as the unlabeled data. We vary $n\in\{0.1n_{\mathrm{all}}, 0.2n_{\mathrm{all}}, 0.3n_{\mathrm{all}}\}$. The target of inference is the linear regression coefficient when regressing income on age and sex: $\theta^* = \argmin_\theta \E[(Y - X_{\mathrm{ols}}^\top \theta)^2]$, where $Y$ is income and $X_{\mathrm{ols}}$ encodes age and sex, $X_{\mathrm{ols}} = (X_{\mathrm{age}}, X_{\mathrm{sex}})$. For the purpose of evaluating coverage, the corresponding coefficient computed on the whole dataset is taken as the ground-truth value of the estimand.
To obtain the models $f^{(j)}$, we train gradient-boosted trees via XGBoost~\cite{chen2016xgboost}. Note that the predictors use all 17 covariates to predict the missing labels, even though the target of inference is only defined with respect to two covariates. We apply cross-prediction with $K=5$ folds. For prediction-powered inference, we use $\ntr = 0.2n$ points for model training, and we also train gradient-boosted trees. The target error level is $\alpha=0.1$ and we average the results over $100$ trials.

In Figure~\ref{fig:census_linear_regression} we plot the coverage and interval width for the three baselines, together with five example intervals. All three methods cover the true target with the desired probability. Moreover, as before, cross-prediction outperforms prediction-powered inference. In this example, the predictive power of the trained models is not high enough for prediction-powered inference to outperform the classical approach; cross-prediction, however, outperforms both. In Table \ref{table:realdata} we report on the stability of the three methods for $n=0.1n_{\mathrm{all}}$. We observe that cross-prediction is more stable than both alternatives. We also observe that prediction-powered inference has more stable intervals than the classical approach, despite the fact that they are wider on average.

\begin{figure}[b]
\centering
\includegraphics[width = 0.25\textwidth]{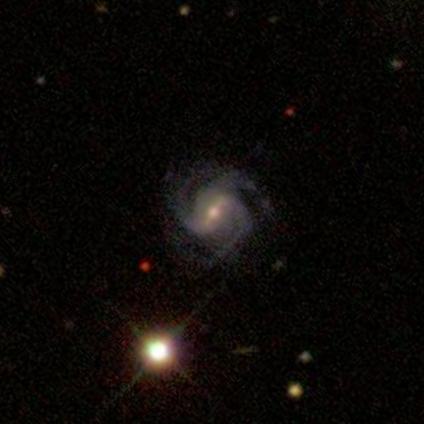}\hspace{20pt}
\includegraphics[width = 0.25\textwidth]{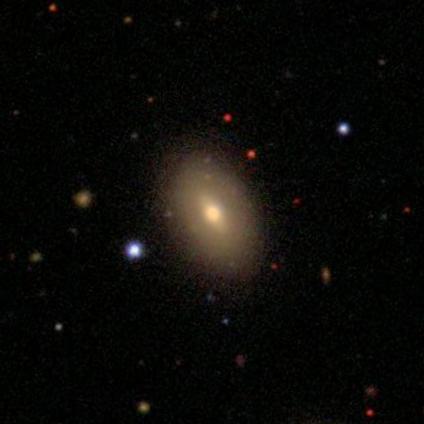}
\caption{Example images of a spiral galaxy (left) and a nonspiral galaxy (right).}
\label{fig:galaxy_examples}
\end{figure}

\subsection{Estimating the prevalence of spiral galaxies from galaxy images}

We next look at galaxy data from the Galaxy Zoo 2 dataset \cite{willett2013galaxy}, consisting of human-annotated images of galaxies from the Sloan Digital Sky 
Survey \cite{york2000sloan}.
Of particular interest are galaxies with spiral arms, which 
are correlated with star formation in the discs of low-redshift galaxies, and thus contribute to the 
understanding of star formation. See Figure \ref{fig:galaxy_examples} for example images of a spiral and a nonspiral galaxy. We show that, by leveraging the massive amounts of unlabeled galaxy imagery together with machine learning, cross-prediction can decrease the requisite number of human annotations for inference on galaxy demographics.

\begin{figure}[t]
\centering
\includegraphics[width = \textwidth]{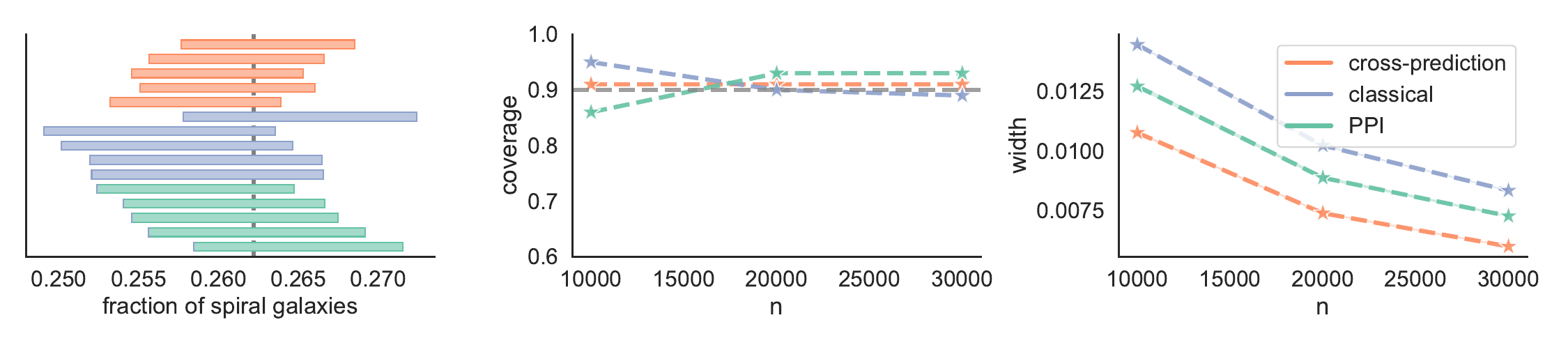}
\caption{\textbf{Estimating the prevalence of spiral galaxies from galaxy images.} Intervals from five randomly chosen trials (left), coverage (middle), and average interval width (right) of cross-prediction, classical inference, and prediction-powered inference (PPI) in a mean estimation problem on galaxy image data. The target $\theta^*$ is the fraction of spiral galaxies.}
\label{fig:galaxies}
\end{figure}

We have $167,434$ annotated galaxy images. In each trial, we randomly split them up into $n$ points to serve as the labeled data, for $n\in\{10000,20000,30000\}$, and treat the remaining data points as unlabeled. The target of inference is the fraction of spiral galaxies in the dataset, equal to about $26.22\%$. To compute predictions, we fine-tune all layers of a pre-trained ResNet50. We apply cross-prediction with $K=3$ folds. For prediction-powered inference, we use $\ntr = 0.1 n$ points for model training. The target error rate is $\alpha=0.1$ and we average the results over $100$ trials.

In Figure \ref{fig:galaxies} we plot the coverage and interval width of the three methods, as well as the intervals for five randomly chosen trials. Both cross-prediction and prediction-powered inference yield smaller intervals than the classical approach. Moreover, cross-prediction dominates prediction-powered inference.
We observe satisfactory coverage for all three procedures. In Table \ref{table:realdata} we evaluate the stability of the procedures for $n=10,000$. Cross-prediction is significantly more stable than classical inference and prediction-powered inference. The latter two achieve a similar degree of stability.

{\renewcommand{\arraystretch}{1.5}\begin{table}[t]
\centering
\begin{tabularx}{\textwidth}{ |c| Y Y| Y Y | Y Y| }
\cline{1-7}
\hline
\rowcolor{Gray}
\textbf{}        & \multicolumn{2}{c|}{\textbf{Deforestation Analysis}}          & \multicolumn{2}{c|}{\textbf{ACS Survey Analysis}}             & \multicolumn{2}{c|}{\textbf{Galaxy Analysis}}                 \\ \hline
\rowcolor{LightGray}
\textbf{Method}  & \multicolumn{1}{c|}{$\hat\sigma_L$} & $\hat\sigma_U$ & \multicolumn{1}{c|}{$\hat\sigma_L$} & $\hat\sigma_U$ & \multicolumn{1}{c|}{$\hat\sigma_L$} & $\hat\sigma_U$ \\ \hline
cross-prediction &      \textbf{0.0158}                                                     &     \textbf{0.0182}            &          \textbf{11.2781}
                                                   &    \textbf{12.2367}
            &          \textbf{0.0029}
                                                & \textbf{0.0029}
              \\ 
classical        &    0.0195                                                         &     0.0232           &     14.5346    &         15.6106       &        0.0037 &       0.0038
         \\ 
PPI              &   0.0200                                                          &    0.0240            &         13.1378                                                     &   13.8733              &         0.0036                            &       0.0037           \\ \hline
\end{tabularx}
\caption{Standard deviation of the lower ($\hat\sigma_L$) and upper ($\hat\sigma_U$) endpoints of the confidence intervals in the problems studied in Figure \ref{fig:deforestation}, Figure \ref{fig:census_linear_regression}, and Figure \ref{fig:galaxies}. For each problem we take $n$ to be the smallest labeled dataset size in the considered range. The minimum value in each column is in bold.}
\label{table:realdata}
\end{table}}

\section{Evaluating heuristics}

In Figure \ref{fig:deforestation}, we saw that cross-prediction gave tighter confidence intervals than the baseline approaches for the problem of deforestation analysis.
In this section, we test two heuristic ways of reducing the variance of the classical approach and prediction-powered inference and compare the heuristics to cross-prediction.

The first heuristic removes the debiasing from the cross-prediction estimator and simply averages the predictions on the large unlabeled dataset:
\begin{equation}
\label{eq:nodebias}
\hat\theta_{\mathrm{nodebias}} = \frac{1}{K  N} \sum_{j=1}^K \sum_{i=1}^N f^{(j)}(\Xt_i).
\end{equation}
This is akin to computing the classical estimator while pretending that the predicted labels are the ground truth.  
The second heuristic trains a model on all the labeled data, $f^{\mathrm{all}} = \mathcal{A}_{\mathrm{train}}(\{(X_i,Y_i)\}_{i=1}^n)$, and computes
\begin{equation*}
% \label{eq:nofolds}
\hat\theta_{\mathrm{nofolds}} = \frac{1}{  N} \sum_{i=1}^N f^{\mathrm{all}}(\Xt_i) - \frac{1}{n} \sum_{i=1}^n (f^{\mathrm{all}}(X_i) - Y_i).
\end{equation*}
This estimator is akin to the prediction-powered estimator if we treated $f^{\mathrm{all}}$ as fixed and independent of the labeled dataset.

For both heuristics, we form confidence intervals based on the usual central limit theorem that assumes i.i.d. sampling. For the first heuristic this is done conditional on the trained models $f^{(j)}$, since the terms $(\frac 1 K \sum_{j=1}^K f^{(j)}(\tilde X_i))_{i\in[N]}$ are indeed conditionally independent given $f^{(1)},\dots,f^{(K)}$. Since the second heuristic proceeds under the assumption that $f^{\mathrm{all}}$ can be seen as being independent of the labeled data, we apply the central limit theorem to the two sums separately, as if $f^{\mathrm{all}}$ were fixed.

We see in Figure \ref{fig:heuristics} that removing the debiasing is detrimental to coverage; removing the folds has a more moderate effect that vanishes with $n$, but it is nevertheless significant.
Cross-prediction yields wider intervals than both heuristics, and by doing so it maintains correct coverage.
% Moreover, as we saw in Figure~\ref{fig:deforestation}, among the provably valid approaches cross-prediction yields the tightest intervals.

\begin{figure}[t]
\centering
\includegraphics[width = 0.75\textwidth]{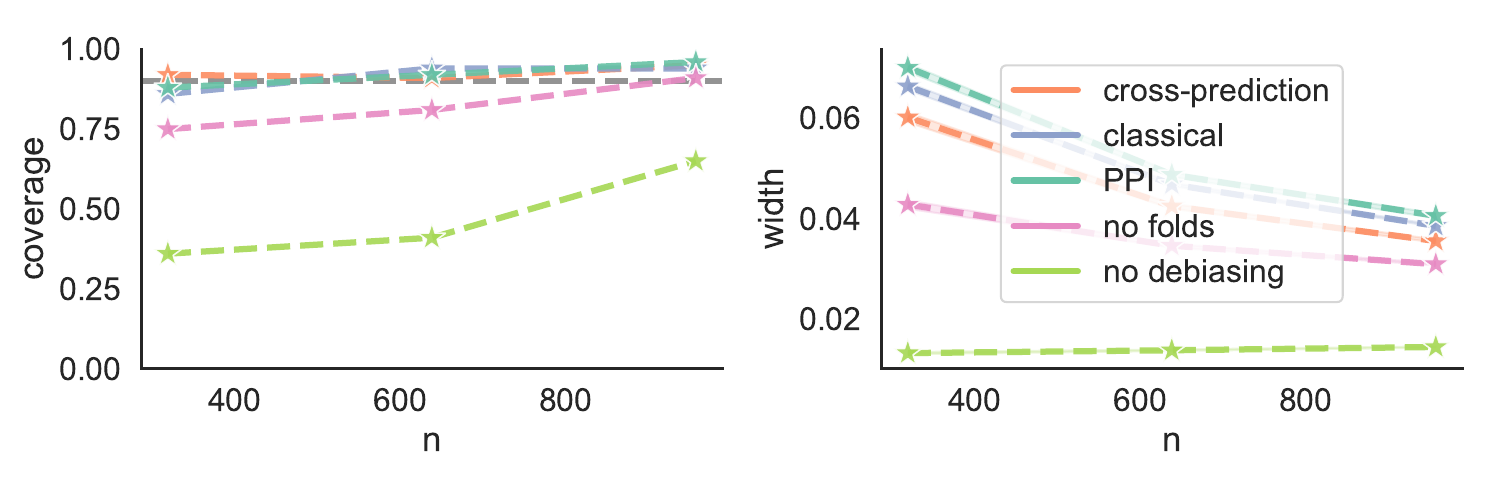}
\caption{\textbf{Estimating the deforestation rate in the Amazon from satellite imagery (revisited).} Coverage and average interval width of cross-prediction, classical inference, and prediction-powered inference (PPI), as well as two heuristics related to cross-fitting: one that removes the debiasing and one that trains on all labeled data instead of forming folds. The experimental setup is the same as in Figure~\ref{fig:deforestation}.}
\label{fig:heuristics}
\end{figure}

\subsection*{Acknowledgements}

We thank Anastasios Angelopoulos, Ying Jin, and Asher Spector for helpful suggestions and feedback on a draft on this manuscript, and Aditya Ghosh for catching and fixing a typo in a previous version of the manuscript. We also thank the reviewers for helping us clarify and improve the paper's exposition. T.Z.~was supported by Stanford Data Science through the Fellowship program. E.J.C.~was supported by the Office of Naval Research grant N00014-20-1-2157, the National Science Foundation grant DMS-2032014, the Simons Foundation under award 814641, and the ARO grant 2003514594.

\bibliographystyle{plainnat}
\bibliography{refs}

\newpage

\appendix

\section{Proofs}

\subsection{Proof of Theorem \ref{thm:clt_mean} (CLT for mean estimation)}

The proof builds on two key technical lemmas, which leverage the notion of stability in Assumption~\ref{ass:stability} to show that we can ``replace'' the models $f^{(j)}$ in the definition of the cross-prediction estimator \eqref{eq:crossfit_mean} with the ``average'' model $\bar f$. Since $\bar f$ is a nonrandom model, we can proceed with a standard CLT analysis of the two terms comprising the estimator.

We begin by stating and proving the technical lemmas, which are inspired by the analysis of cross-validation due to~\citet{bayle2020cross}. To simplify notation, we use $\E_X$ and $\Var_X$ to denote the expectation and variance conditional on everything but $X$.

\begin{lemma}
\label{lemma:term1}
Suppose that the predictions are stable (Ass. \ref{ass:stability}). Denote
\begin{equation}
\label{eq:til_Fj}\tilde F_j = \frac{1}{\sqrt{N}} \sum_{i=1}^N \left(f^{(j)}(\Xt_i) - \E_{X} [f^{(j)}(X) ] - (\bar f(\Xt_i) - \E [\bar f(X) ])\right) .
\end{equation}
Then,
$$\frac 1 K \sum_{j=1}^K \tilde F_j \stackrel{p}{\to} 0.$$
\end{lemma}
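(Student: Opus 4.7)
The plan is to reduce the problem to showing $\E\bigl|\tilde F_j\bigr| \to 0$ for each $j$ and then apply Markov's inequality. Because the $f^{(j)}$'s are all trained on size-$n(1-1/K)$ subsamples of the same i.i.d.\ labeled dataset, they are identically distributed, so $\E|\tilde F_j|$ does not depend on $j$ and it suffices to handle $\tilde F_1$.

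The key observation is that, conditional on $f^{(j)}$, the summands in \eqref{eq:til_Fj} are i.i.d. in $i$ (since $\tilde X_1,\dots,\tilde X_N$ are i.i.d. and independent of the labeled data used to form $f^{(j)}$) and have conditional mean zero: the centering $\E_X[f^{(j)}(X)]$ is precisely the conditional mean of $f^{(j)}(\tilde X_i)$ given $f^{(j)}$, and $\E[\bar f(X)]$ is the deterministic mean of $\bar f(\tilde X_i)$. Hence
\[
\Var\bigl(\tilde F_j \,\bigm|\, f^{(j)}\bigr) \;=\; \Var\bigl(f^{(j)}(X) - \bar f(X) \,\bigm|\, f^{(j)}\bigr),
\]
where the $1/\sqrt N$ in the definition of $\tilde F_j$ exactly cancels the growth of the variance of the sum. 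From here, the conditional Jensen inequality gives
\[
\E\bigl|\tilde F_j\bigr| \;=\; \E\!\left[\E\bigl[|\tilde F_j| \bigm| f^{(j)}\bigr]\right] \;\leq\; \E\!\left[\sqrt{\Var\bigl(f^{(j)}(X) - \bar f(X)\bigm| f^{(j)}\bigr)}\right].
\]

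Now I would invoke Assumption~\ref{ass:stability}: it says $\sqrt{K\,\Var(f^{(1)}(X)-\bar f(X)\mid f^{(1)})} \stackrel{L^1}{\to} 0$, which certainly implies $\E\bigl[\sqrt{\Var(f^{(1)}(X)-\bar f(X)\mid f^{(1)})}\bigr] \to 0$. Combining with the triangle inequality,
\[
\E\!\left|\frac{1}{K}\sum_{j=1}^K \tilde F_j\right| \;\leq\; \frac{1}{K}\sum_{j=1}^K \E\bigl|\tilde F_j\bigr| \;=\; \E\bigl|\tilde F_1\bigr| \;\longrightarrow\; 0,
\]
so $\frac{1}{K}\sum_j \tilde F_j \to 0$ in $L^1$, hence in probability, by Markov's inequality.

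The only potential snag is making the conditional-distribution bookkeeping airtight: one has to be careful that ``$\E_X$'' in \eqref{eq:til_Fj} is indeed the conditional expectation over a fresh $X\sim \P_X$ given $f^{(j)}$, so that the summands really are centered given $f^{(j)}$; and that $\bar f$ is a deterministic function (the pointwise expectation of $f^{(1)}$ over the training randomness), independent of the unlabeled data. Once that is pinned down, the argument reduces to the one-line conditional Jensen bound above plus stability, and is essentially routine.
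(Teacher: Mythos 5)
Your argument is correct and follows essentially the same route as the paper's proof: condition on $f^{(j)}$, note the summands are i.i.d.\ and centered so that $\Var(\tilde F_j\mid f^{(j)})=\Var_X(f^{(j)}(X)-\bar f(X)\mid f^{(j)})$, bound $\E[|\tilde F_j|\mid f^{(j)}]$ by the conditional standard deviation via Jensen, and invoke Assumption~\ref{ass:stability}. The only (cosmetic, under the stated $L^1$ hypothesis) difference is that you work directly with $L^1$ convergence and the triangle inequality, whereas the paper routes the same bound through the truncation $\psi(x)=\min(1,x)$ and the equivalence $X_n\stackrel{p}{\to}0 \iff \E[\min(1,|X_n|)]\to 0$, which would also accommodate a weaker, in-probability version of the stability condition.
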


\begin{proof}
Let $\psi(x) = \min(1,x)$. We will use the fact that $\frac 1 K \sum_{j=1}^K \tilde F_j \stackrel{p}{\to} 0$ if and only if $\E[\psi(|\frac 1 K \sum_{j=1}^K \tilde F_j|)]\rightarrow 0$, as stated in Fact \ref{fact:psi_equivalence} below. See, for example, \citet{bayle2020cross} for a proof of the fact.

\begin{fact}
\label{fact:psi_equivalence}
Let $X_n$ be a sequence of random variables. Then, $X_n\stackrel{p}{\to} 0$ if and only if $\E[\min(1,|X_n|)]\to 0$.
\end{fact}

Note that $\psi(x)$ is nondecreasing and satisfies $\psi\left(\sum_{j=1}^K x_j \right)\leq \sum_{j=1}^K \psi(x_j)$ for non-negative $x_j$; this yields
\begin{align*}
\E\left[\psi\left(\left| \frac 1 K \sum_{j=1}^K \tilde F_j \right|\right) \right] &\leq  \E\left[\psi\left( \frac 1 K \sum_{j=1}^K  | \tilde F_j |\right) \right] \leq  \sum_{j=1}^K \E\left[\psi\left( \frac 1 K   | \tilde F_j |\right) \right].
\end{align*}

Notice that $\psi(x)$ is also concave. Therefore, by Jensen's inequality, we have
\begin{align*}
\sum_{j=1}^K \E\left[\psi\left( \frac 1 K   | \tilde F_j |\right) \right]
&\leq \sum_{j=1}^K \E\left[\psi\left(  \frac 1 K  \E\left[|\tilde F_j| \Big| f^{(j)} \right] \right) \right]\\
&\leq \sum_{j=1}^K \E\left[ \psi\left( \frac 1 K  \sqrt{\E [\tilde F_j^2|f^{(j)}  ]}\right) \right]\\
&= \sum_{j=1}^K  \E\left[\psi\left( \frac 1 K \sqrt{\Var(\tilde F_j|f^{(j)} ) } \right) \right]  \\
&= \sum_{j=1}^K \E\left[ \psi\left( \frac 1 K \sqrt{\Var_{X}\left( f^{(j)}(X) - \bar f(X)\right) } \right) \right]\\
&= \E\left[\min\left(K, \sqrt{\Var_{X}\left( f^{(1)}(X) - \bar f(X)\right) } \right)\right].
\end{align*}
Invoking the stability condition shows that the right-hand side converges to zero. Note that, technically, the stability condition is stronger than what is needed for the expression above to converge to zero. In particular, stability ensures that $\E\left[\sqrt{K \cdot \Var_{X}\left( f^{(1)}(X) - \bar f(X)\right) }\right]\to 0$, while for the expression above to vanish it would suffice to ensure $\E\left[\sqrt{ \Var_{X}\left( f^{(1)}(X) - \bar f(X)\right) }\right]\to 0$. The stronger condition will be used in the next technical lemma, which handles the second term in the cross-prediction estimator.

Putting everything together, we get that $\frac 1 K \sum_{j=1}^K \tilde F_j\stackrel{p}{\to} 0 $, as desired.
\end{proof}

\begin{lemma}
\label{lemma:term2}
Suppose that the predictions are stable (Ass. \ref{ass:stability}). Denote
\begin{equation}
\label{eq:Fj}
F_j = \frac{1}{\sqrt{n}} \sum_{i\in I_j} \left(f^{(j)}(X_i) - \E_X[f^{(j)}(X)] - (\bar f(X_i) - \E [\bar f(X) ])\right).
\end{equation}
Then,
$$\sum_{j=1}^K F_j \stackrel{p}{\to} 0.$$
\end{lemma}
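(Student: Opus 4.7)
The plan is to mirror the proof of Lemma \ref{lemma:term1} essentially step-by-step, but exploit the independence between fold $I_j$ and the model $f^{(j)}$ (which is trained on the complement of $I_j$), and carefully track the different normalization. Concretely, conditional on $f^{(j)}$, the random variables $\{X_i\}_{i\in I_j}$ are i.i.d.\ from $\P_X$ and independent of $f^{(j)}$, so each summand of $F_j$ is conditionally mean-zero. Hence
\begin{equation*}
\Var(F_j \mid f^{(j)}) = \frac{|I_j|}{n}\Var_X\!\left(f^{(j)}(X) - \bar f(X)\right) = \frac{1}{K}\Var_X\!\left(f^{(j)}(X) - \bar f(X)\right).
\end{equation*}
The factor $1/K$ here (rather than $1$, as effectively appears in Lemma \ref{lemma:term1}) is the reason the assumption is phrased with the $\sqrt{K}$ factor in front, as was foreshadowed at the end of the proof of Lemma \ref{lemma:term1}.

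Next, by Fact \ref{fact:psi_equivalence} it suffices to show $\E[\psi(|\sum_{j=1}^K F_j|)] \to 0$, where $\psi(x)=\min(1,x)$. Using monotonicity and subadditivity of $\psi$ on nonnegative arguments, followed by Jensen's inequality (applied conditionally on $f^{(j)}$ using concavity of $\psi$), I would obtain
\begin{equation*}
\E\!\left[\psi\!\left(\Big|\sum_{j=1}^K F_j\Big|\right)\right] \leq \sum_{j=1}^K \E[\psi(|F_j|)] \leq \sum_{j=1}^K \E\!\left[\psi\!\left(\sqrt{\E[F_j^2 \mid f^{(j)}]}\right)\right] = \sum_{j=1}^K \E\!\left[\psi\!\left(\sqrt{\tfrac{1}{K}\Var_X(f^{(j)}(X)-\bar f(X))}\right)\right].
\end{equation*}
Since the folds are exchangeable and each $f^{(j)}$ has the same marginal distribution as $f^{(1)}$, all $K$ summands are equal in expectation, yielding a bound of $K\,\E[\psi(\sqrt{V_1/K})]$ with $V_1 := \Var_X(f^{(1)}(X) - \bar f(X))$.

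The final step is the calculation $K\psi(\sqrt{V_1/K}) = \min(K,\sqrt{K V_1}) \leq \sqrt{K V_1}$, whence
\begin{equation*}
\E\!\left[\psi\!\left(\Big|\sum_{j=1}^K F_j\Big|\right)\right] \leq \E\!\left[\sqrt{K V_1}\right] = \E\!\left[\sqrt{K\,\Var(f^{(1)}(X)-\bar f(X) \mid f^{(1)})}\right],
\end{equation*}
which tends to zero by Assumption \ref{ass:stability}. The only mildly subtle point—and the place where this lemma genuinely needs the full strength of the stability assumption rather than the weaker version sufficient for Lemma \ref{lemma:term1}—is the bookkeeping around the $\sqrt{K}$ factor; everything else is a direct parallel to the previous argument.
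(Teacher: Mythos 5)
Your proof is correct and follows essentially the same route as the paper's: Fact \ref{fact:psi_equivalence}, subadditivity and concavity of $\psi$, the conditional variance computation giving the factor $|I_j|/n = 1/K$, and the bound $\min(K,\sqrt{KV_1})\leq\sqrt{KV_1}$ combined with Assumption \ref{ass:stability}. Your remark about why this lemma (unlike Lemma \ref{lemma:term1}) needs the full $\sqrt{K}$ strength of the stability condition matches the paper's own observation.
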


\begin{proof}
The proof follows a similar principle as the proof of Lemma \ref{lemma:term1}. 
As before, we let $\psi(x) = \min(1,x)$ and use the fact $\sum_{j=1}^K F_j\stackrel{p}{\to} 0$ if and only if $\E[\psi(|\sum_{j=1}^K F_j|)] \to 0$ (see Fact \ref{fact:psi_equivalence}). We use the fact that $\psi\left(\sum_{j=1}^K x_j \right)\leq \sum_{j=1}^K \psi(x_j)$ for non-negative $x_j$; this yields
\begin{align*}
\E\left[\psi\left(\left| \sum_{j=1}^K F_j \right|\right) \right] &\leq \E\left[\psi\left( \sum_{j=1}^K |F_j|\right) \right] \leq \sum_{j=1}^K \E\left[\psi\left(  |F_j|\right) \right].
\end{align*}
Next, by Jensen's inequality, we have
\begin{align*}
\sum_{j=1}^K \E\left[\psi\left(  |F_j|\right) \right]
&\leq \sum_{j=1}^K \E\left[\psi\left(   \E[|F_j|~| f^{(j)}] \right) \right]\\
&\leq \sum_{j=1}^K \E\left[\psi\left(  \sqrt{\E[F_j^2 | f^{(j)}] } \right) \right]\\
&= \sum_{j=1}^K \E\left[\psi\left(  \sqrt{\Var(F_j | f^{(j)}) } \right) \right]\\
&= \sum_{j=1}^K \E\left[\psi\left(  \sqrt{\frac{|I_j|}{n}\Var_{X} \left(f^{(j)}(X) - \bar f(X)\right) } \right) \right]\\
&= \E\left[\min\left(K, \sqrt{K\cdot \Var_{X} \left(f^{(1)}(X) - \bar f(X)\right) }\right)\right].
\end{align*}
Invoking the stability condition shows that the right-hand side converges to zero. Hence,  $ \sum_{j=1}^K F_j\stackrel{p}{\to} 0 $.
\end{proof}

With Lemma \ref{lemma:term1} and Lemma \ref{lemma:term2} in hand, we can now prove Theorem \ref{thm:clt_mean}. As alluded to earlier, the idea is to use Lemma \ref{lemma:term1} and Lemma \ref{lemma:term2} to replace the models $f^{(j)}$ in the definition of the cross-prediction estimator \eqref{eq:crossfit_mean}.

Writing $\theta^* = \frac{1}{K} \sum_{j=1}^K (\E[f^{(j)}(X)] - \E[f^{(j)}(X) - Y])$, we have
\begin{align}
\label{eq:T1_plus_T2}
&\frac{\sqrt{n}}{\sqrt{\frac n N \bar\sigma^2 + \bar \sigma_\Delta^2}} \left(\hat \theta^+ - \theta^* \right)\nonumber \\
&\quad = \frac{\sqrt{n}}{\sqrt{\frac n N \bar\sigma^2 + \bar \sigma_\Delta^2}} \left(\frac{1}{K  N} \sum_{j=1}^K \sum_{i=1}^N (f^{(j)}(\Xt_i) - \E_X[f^{(j)}(X)]) - \frac{1}{n} \sum_{j=1}^K \sum_{i\in I_j} ((f^{(j)}(X_i) - Y_i) - \E_{X,Y}[ f^{(j)}(X) - Y]) \right) \nonumber \\
&\quad = \frac{\sqrt{n}}{\sqrt{\frac n N \bar\sigma^2 + \bar \sigma_\Delta^2}} ( T_1 - T_2),
\end{align}
where we define
\begin{align*}
T_1 = \frac{1}{K  N} \sum_{j=1}^K \sum_{i=1}^N (f^{(j)}(\Xt_i) - \E_X[f^{(j)}(X)]);\quad 
T_2 =  \frac{1}{n} \sum_{j=1}^K \sum_{i\in I_j} ((f^{(j)}(X_i) - Y_i) - \E_{X,Y}[ f^{(j)}(X) - Y]).
\end{align*}
Focusing on $T_1$, we have
\begin{align*}
\sqrt{N} T_1 &= \frac 1 K \sum_{j=1}^K \tilde F_j + \frac{1}{\sqrt N} \sum_{i=1}^N  (\bar f(\Xt_i) - \E[\bar f(X)]),
\end{align*}
for $\tilde F_j$ defined in Eq.~\eqref{eq:til_Fj}. Invoking Lemma \ref{lemma:term1}, we thus get $\sqrt N T_1 = \frac{1}{\sqrt N} \sum_{i=1}^N  (\bar f(\Xt_i) - \E[\bar f(X)]) + o_P(1)$.

By an analogous argument, we have 
\begin{align*}
\sqrt{n} T_2 &= \sum_{j=1}^K F_j + \frac{ 1}{\sqrt{n}} \sum_{j=1}^K \sum_{i\in I_j} \left((\bar f(X_i) - Y_i) - \E_{X,Y}[\bar f(X) - Y]\right)\\
&= \frac{ 1}{\sqrt{n}} \sum_{i=1}^n  \left((\bar f(X_i) - Y_i) - \E_{X,Y}[\bar f(X) - Y]\right) + o_P(1),
\end{align*}
for $F_j$ defined in Eq.~\eqref{eq:Fj}. Going back to Eq.~\eqref{eq:T1_plus_T2} and denoting by $r, \bar \sigma^2_{\lim}, \bar\sigma_{\Delta,\lim}^2$ the limits of $\frac n N, \bar \sigma^2, \bar \sigma^2_\Delta$, respectively, we get
\begin{align*}
&\frac{\sqrt{n}}{\sqrt{\frac n N \bar\sigma^2 + \bar \sigma_\Delta^2}} \left(\hat \theta^+ - \theta^* \right)\\
&\quad = \frac{\sqrt{\frac n N}}{\sqrt{\frac n N \bar\sigma^2 + \bar \sigma_\Delta^2}} \sqrt{N} T_1 - \frac{1}{\sqrt{\frac n N \bar\sigma^2 + \bar \sigma_\Delta^2}} \sqrt{n} T_2\\
&\quad = \frac{\sqrt{\frac n N}}{\sqrt{\frac n N \bar\sigma^2 + \bar \sigma_\Delta^2}} \frac{1}{\sqrt N} \sum_{i=1}^N  (\bar f(\Xt_i) - \E[\bar f(X)]) - \frac{1}{\sqrt{\frac n N \bar\sigma^2 + \bar \sigma_\Delta^2}} \frac{ 1}{\sqrt{n}} \sum_{i=1}^n  \left((\bar f(X_i) - Y_i) - \E_{X,Y}[\bar f(X) - Y]\right) + o_P(1).
\end{align*}
By the Lindeberg central limit theorem, the first term converges in distribution to $\mathcal N\left(0, \frac{r \bar\sigma_{\lim}^2}{r \bar\sigma_{\lim}^2 + \bar \sigma_{\Delta, \lim}^2} \right)$, and the second term converges in distribution to $\mathcal N\left(0, \frac{\bar \sigma_{\Delta, \lim}^2}{r \bar\sigma_{\lim}^2 + \bar \sigma_{\Delta, \lim}^2} \right)$. Moreover, since the two terms are independent, we finally have
$$\frac{\sqrt{n}}{\sqrt{\frac n N \bar\sigma^2 + \bar \sigma_\Delta^2}} \left(\hat \theta^+ - \theta^* \right) \cd \mathcal N(0,1).$$

\subsection{Proof of Theorem \ref{thm:clt_general} (CLT for general M-estimation)}

The proof follows a similar template as the proof of Theorem \ref{thm:clt_mean}. We begin with two technical lemmas that allow swapping the models $f^{(j)}$ in the gradient of the cross-prediction loss,  $\nabla L^+(\theta)$, with the ``average'' model $\bar f$.

\begin{lemma}
\label{lemma:term1_general}
Suppose that the predictions are stable (Ass. \ref{ass:stability_general}). Denote
\begin{equation}
\label{eq:til_Lj}
\tilde L_j = \frac{1}{\sqrt{N}} \sum_{i=1}^N \left(\nabla \tilde \ell_{\theta,i}^{f^{(j)}} - \E [\nabla \ell_{\theta}^{f^{(j)}}] - (\nabla \tilde \ell_{\theta,i}^{\bar f} - \E [\nabla \ell_{\theta}^{\bar f}])\right) .
\end{equation}
Then,
$$\frac 1 K \sum_{j=1}^K \tilde L_j \stackrel{p}{\to} 0.$$
\end{lemma}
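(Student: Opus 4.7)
The plan is to mirror the proof of Lemma \ref{lemma:term1} coordinate-wise. Since convergence in probability of a random vector to zero is equivalent to convergence in probability of each coordinate, it suffices to fix an arbitrary index $m \in \{1, \dots, d\}$ and prove that the $m$-th coordinate of $\frac{1}{K}\sum_{j=1}^K \tilde L_j$ tends to zero in probability.

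Fix such an $m$ and introduce the scalar surrogates $g^{(j)}(x) := (\nabla \ell_{\theta}(x, f^{(j)}(x)))_m$ and $\bar g(x) := (\nabla \ell_{\theta}(x, \bar f(x)))_m$. Then
\begin{equation*}
(\tilde L_j)_m = \frac{1}{\sqrt{N}} \sum_{i=1}^N \Bigl( g^{(j)}(\tilde X_i) - \E_X[g^{(j)}(X)] - \bigl(\bar g(\tilde X_i) - \E[\bar g(X)]\bigr) \Bigr),
\end{equation*}
which is structurally identical to the object $\tilde F_j$ in Lemma \ref{lemma:term1}, with $(f^{(j)}, \bar f)$ replaced by the scalars $(g^{(j)}, \bar g)$. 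Repeating that argument verbatim---using Fact \ref{fact:psi_equivalence}, the subadditivity of $\psi(x) = \min(1, x)$ on non-negative inputs, and then Jensen's inequality conditional on $f^{(j)}$ to pull the expectation inside the square root---I obtain
\begin{equation*}
\E\!\left[ \psi\!\left( \left| \frac{1}{K}\sum_{j=1}^K (\tilde L_j)_m \right| \right) \right] \;\leq\; \E\!\left[ \min\!\left(K,\, \sqrt{\Var_X\bigl(g^{(1)}(X) - \bar g(X)\bigr)} \right) \right].
\end{equation*}

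The decisive step is verifying that the right-hand side vanishes, and this is where the generalized stability assumption enters. The $m$-th diagonal entry of the conditional covariance matrix $\Var\bigl(\nabla \ell_{\theta}(X, f^{(1)}(X)) - \nabla \ell_{\theta}(X, \bar f(X)) \mid f^{(1)}\bigr)$ is exactly $\Var_X(g^{(1)}(X) - \bar g(X))$, so the element-wise $L^1$ convergence required by Assumption \ref{ass:stability_general} implies $\sqrt{\Var_X(g^{(1)}(X) - \bar g(X))} \stackrel{L^1}{\to} 0$ (in fact with a spare factor of $\sqrt{K}$, which I do not need here). Combining this with the trivial bound $\min(K, x) \leq x$, the expectation on the right-hand side tends to zero, so Fact \ref{fact:psi_equivalence} yields $\frac{1}{K}\sum_{j=1}^K(\tilde L_j)_m \stackrel{p}{\to} 0$. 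Aggregating across $m = 1, \dots, d$ gives the desired vector-valued convergence.

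The only subtle point I anticipate is the bookkeeping needed to extract a statement about individual diagonal entries from the element-wise matrix $L^1$ convergence in Assumption \ref{ass:stability_general}; no new probabilistic ingredients beyond those already used in Lemma \ref{lemma:term1} are required, since the reduction to scalars via $(g^{(j)}, \bar g)$ preserves the independence structure (the $\tilde X_i$ are i.i.d.\ and independent of $f^{(j)}$) that the $\psi$-and-Jensen argument exploits.
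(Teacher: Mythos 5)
Your proposal is correct and matches the paper's intent exactly: the paper dispatches Lemma \ref{lemma:term1_general} with the single remark that it is ``proved completely analogously to Lemma \ref{lemma:term1} \ldots entry-wise,'' and your coordinate-wise reduction via the scalar surrogates $g^{(j)}, \bar g$, followed by the same $\psi$-and-Jensen argument and the observation that the relevant diagonal entry of the conditional covariance matrix is controlled by Assumption \ref{ass:stability_general}, is precisely that argument spelled out.
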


\begin{lemma}
\label{lemma:term2_general}
Suppose that the predictions are stable (Ass. \ref{ass:stability_general}). Denote
\begin{equation}
\label{eq:Lj}
L_j = \frac{1}{\sqrt{n}} \sum_{i\in I_j} \left(\nabla  \ell_{\theta,i}^{f^{(j)}} - \E[\nabla  \ell_{\theta}^{f^{(j)}}] - (\nabla  \ell_{\theta,i}^{\bar f} - \E[\nabla  \ell_{\theta}^{\bar f}])\right).
\end{equation}
Then,
$$\sum_{j=1}^K L_j \stackrel{p}{\to} 0.$$
\end{lemma}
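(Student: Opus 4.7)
The plan is to mirror the proof of Lemma \ref{lemma:term2} in the scalar case, working coordinate-by-coordinate since Assumption \ref{ass:stability_general} is phrased element-wise on the conditional covariance matrix. Fix an arbitrary coordinate $k\in[d]$ of the gradient and let $L_j^{(k)}$ denote the $k$-th entry of $L_j$. By Fact \ref{fact:psi_equivalence}, it suffices to establish $\E[\psi(|\sum_{j=1}^K L_j^{(k)}|)] \to 0$ for $\psi(x)=\min(1,x)$, and then combine across the fixed finite number of coordinates to obtain $\sum_{j=1}^K L_j \stackrel{p}{\to} 0$.

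First, I would apply subadditivity of $\psi$ on $\mathbb R_{\ge 0}$, which gives the bound $\E[\psi(|\sum_j L_j^{(k)}|)] \le \sum_{j=1}^K \E[\psi(|L_j^{(k)}|)]$, exactly as in the mean-case proof. Next, since $f^{(j)}$ is trained only on data in $[n]\setminus I_j$, the samples $(X_i,Y_i)_{i\in I_j}$ are independent of $f^{(j)}$, so conditional on $f^{(j)}$ the summands of $L_j^{(k)}$ are i.i.d.\ and (after the centering by $\E[\nabla \ell_\theta^{f^{(j)}}]$) mean zero. This lets me apply Jensen's inequality via concavity of $\psi$, first passing from $|L_j^{(k)}|$ to its conditional $L^1$-norm and then to its conditional standard deviation:
$$\E\big[\psi(|L_j^{(k)}|)\big] \le \E\Big[\psi\Big(\sqrt{\Var(L_j^{(k)}\mid f^{(j)})}\Big)\Big].$$

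The conditional-i.i.d.\ structure together with $|I_j|=n/K$ then gives $\Var(L_j^{(k)}\mid f^{(j)}) = \tfrac{1}{K}\,[\Var(\nabla \ell_\theta(X,f^{(j)}(X)) - \nabla \ell_\theta(X,\bar f(X))\mid f^{(j)})]_{kk}$. Summing over the $K$ folds (and using symmetry across $j$) collapses the right-hand side to $\E\big[\min\big(K,\sqrt{K\cdot[\Var(\nabla\ell_\theta(X,f^{(1)}(X))-\nabla\ell_\theta(X,\bar f(X))\mid f^{(1)})]_{kk}}\big)\big]$, which vanishes as $n\to\infty$ by the element-wise stability condition in Assumption \ref{ass:stability_general} and the dominated convergence theorem (the integrand is bounded by $K$).

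There is no genuinely new difficulty beyond what appeared in Lemma \ref{lemma:term2}; the only delicate point is bookkeeping, namely verifying that the conditional-on-$f^{(j)}$ centering in the definition of $L_j$ lines up with the variance that appears in the stability hypothesis, and confirming that reducing to coordinate-wise convergence is valid (it is, since $d$ is fixed and convergence in probability in $\mathbb{R}^d$ is equivalent to coordinate-wise convergence). With that, the argument of Lemma \ref{lemma:term2} transfers verbatim with $f^{(j)}(X)-\bar f(X)$ replaced by the $k$-th entry of $\nabla\ell_\theta(X,f^{(j)}(X))-\nabla\ell_\theta(X,\bar f(X))$, and the full vector conclusion $\sum_{j=1}^K L_j \stackrel{p}{\to} 0$ follows.
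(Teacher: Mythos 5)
Your proposal is correct and takes essentially the same approach as the paper, which proves this lemma by applying the argument of Lemma \ref{lemma:term2} entry-wise---precisely the coordinate-by-coordinate reduction you describe, with the same chain of subadditivity, Jensen, and the conditional i.i.d.\ variance computation. One small remark: the dominated convergence theorem is neither needed nor strictly applicable here (Assumption \ref{ass:stability_general} gives only $L^1$, not almost-sure, convergence); the elementary bound $\E[\min(K,Z)]\le\E[Z]$ combined with the $L^1$ convergence already closes the argument, exactly as in Lemma \ref{lemma:term2}.
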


Lemma \ref{lemma:term1_general} and Lemma \ref{lemma:term2_general} are proved completely analogously to Lemma \ref{lemma:term1} and Lemma \ref{lemma:term2}; we apply the same argument as before entry-wise.

Now we put the lemmas together to prove the central limit theorems. We first analyze the asymptotic normality of $\nabla L^+(\theta)$. The asymptotic normality of $\hat\theta^+$ relies on a similar application of the lemmas.

\paragraph{Asymptotic normality of $\nabla L^+(\theta)$.} We can write
\begin{align}
\label{eq:T1_T2_decomp}
&\sqrt{n} \left(\frac n N \bar\Sigma_\theta + \bar \Sigma_{\Delta,\theta} \right)^{-1/2} (\nabla L^+(\theta) - \nabla L(\theta)) \nonumber\\
&\quad = \sqrt{n} \left(\frac n N \bar\Sigma_\theta + \bar \Sigma_{\Delta,\theta} \right)^{-1/2} \left(\frac{1}{K N} \sum_{j=1}^K \sum_{i=1}^N  \nabla \tilde \ell_{\theta,i}^{f^{(j)}} - \frac{1}{n} \sum_{j=1}^K \sum_{i\in I_j}( \nabla \ell_{\theta,i}^{f^{(j)}} - \nabla \ell_{\theta,i} ) - \nabla L(\theta) \right) \nonumber \\
&\quad = \sqrt{n} \left(\frac n N \bar\Sigma_\theta + \bar \Sigma_{\Delta,\theta} \right)^{-1/2} \left(\frac{1}{K N} \sum_{j=1}^K \sum_{i=1}^N  (\nabla \tilde \ell_{\theta,i}^{f^{(j)}} - \E[\nabla \ell_{\theta}^{f^{(j)}}] ) - \frac{1}{n} \sum_{j=1}^K \sum_{i\in I_j}( (\nabla \ell_{\theta,i}^{f^{(j)}} - \nabla \ell_{\theta,i}) - \E[\nabla \ell_{\theta}^{f^{(j)}} - \nabla \ell_{\theta}]) \right) \nonumber \\
&\quad = \sqrt{n} \left(\frac n N \bar\Sigma_\theta + \bar \Sigma_{\Delta,\theta} \right)^{-1/2} \left(T_1 - T_2\right),
\end{align}
where we define
\begin{align*}
T_1 = \frac{1}{K N} \sum_{j=1}^K \sum_{i=1}^N  (\nabla \tilde \ell_{\theta,i}^{f^{(j)}} - \E[\nabla \ell_{\theta}^{f^{(j)}}] ); \quad
T_2 = \frac{1}{n} \sum_{j=1}^K \sum_{i\in I_j}( (\nabla \ell_{\theta,i}^{f^{(j)}} - \nabla \ell_{\theta,i}) - \E[\nabla \ell_{\theta}^{f^{(j)}} - \nabla \ell_{\theta}]).
\end{align*}
We apply Lemma \ref{lemma:term1_general} and Lemma \ref{lemma:term2_general} to $T_1$ and $T_2$, respectively. In particular, we can write
\begin{align*}
\sqrt{N} T_1 = \frac{1}{K} \sum_{j=1}^K \tilde L_j + \frac{1}{\sqrt{N}} \sum_{i=1}^N (\nabla \tilde \ell_{\theta,i}^{\bar f} - \E[\nabla \ell_{\theta}^{\bar f}]),
\end{align*}
where $\tilde L_j$ is given in Eq.~\eqref{eq:til_Lj}. Invoking Lemma \ref{lemma:term1_general}, we thus have $\sqrt N T_1 = \frac{1}{\sqrt{N}} \sum_{i=1}^N (\nabla \tilde \ell_{\theta,i}^{\bar f} - \E[\nabla \ell_{\theta}^{\bar f}]) + o_P(1)$.

By an analogous argument, for $L_j$ defined in Eq.~\eqref{eq:Lj}, we have
\begin{align*}
\sqrt{n} T_2 &= \sum_{j=1}^K  L_j + \frac{1}{\sqrt{n}} \sum_{j=1}^K \sum_{i\in I_j} \left((\nabla  \ell_{\theta,i}^{\bar f} - \nabla  \ell_{\theta,i}) - \E[\nabla \ell_{\theta}^{\bar f} - \nabla \ell_{\theta}]\right)\\
&= \frac{1}{\sqrt{n}} \sum_{i=1}^n \left((\nabla  \ell_{\theta,i}^{\bar f} - \nabla  \ell_{\theta,i}) - \E[\nabla \ell_{\theta}^{\bar f} - \nabla \ell_{\theta}]\right) + o_P(1).
\end{align*}

Going back to Eq.~\eqref{eq:T1_T2_decomp} and denoting by $r, \bar \Sigma_{\theta,\lim}, \bar \Sigma_{\Delta,\theta,\lim}$ the limits of $\frac n N, \bar \Sigma_{\theta}, \bar \Sigma_{\Delta,\theta}$, respectively, by the Lindeberg central limit theorem we get
\begin{align*}
\sqrt{n} \left(\frac n N \bar\Sigma_\theta + \bar \Sigma_{\Delta,\theta} \right)^{-1/2} \left(\nabla L^+(\theta) - \nabla L(\theta) \right)
&= \left(\frac n N \bar\Sigma_\theta + \bar \Sigma_{\Delta,\theta} \right)^{-1/2} \left(\frac{\sqrt{n}}{\sqrt N} \sqrt N T_1 - \sqrt{n} T_2\right)\\
&= \left(\frac n N \bar\Sigma_\theta + \bar \Sigma_{\Delta,\theta} \right)^{-1/2} \sqrt{\frac n N} \frac{1}{\sqrt{N}} \sum_{i=1}^N (\nabla \tilde \ell_{\theta,i}^{\bar f} - \E[\nabla \ell_{\theta}^{\bar f}])\\
&- \left(\frac n N \bar\Sigma_\theta + \bar \Sigma_{\Delta,\theta} \right)^{-1/2} \frac{1}{\sqrt{n}} \sum_{i=1}^n \left((\nabla  \ell_{\theta,i}^{\bar f} - \nabla  \ell_{\theta,i}) - \E[\nabla \ell_{\theta}^{\bar f}- \nabla \ell_{\theta}]\right)
% &\quad \cd \mathcal N\left(0, r\cdot (r \bar\Sigma_{\theta,\lim} + \bar \Sigma_{\Delta,\theta,\lim} )^{-1/2} \bar\Sigma_{\theta,\lim}  (r \bar\Sigma_{\theta,\lim} + \bar \Sigma_{\Delta,\theta,\lim} )^{-1/2} \right)\\
% &\quad + \mathcal N\left(0, (r \bar\Sigma_{\theta,\lim} + \bar \Sigma_{\Delta,\theta,\lim} )^{-1/2} \bar \Sigma_{\Delta,\theta,\lim}  (r \bar\Sigma_{\theta,\lim} + \bar \Sigma_{\Delta,\theta,\lim} )^{-1/2} \right).
\end{align*}
The first term above converges in distribution to 
$$\mathcal N\left(0, r\cdot (r \bar\Sigma_{\theta,\lim} + \bar \Sigma_{\Delta,\theta,\lim} )^{-1/2} \bar\Sigma_{\theta,\lim}  (r \bar\Sigma_{\theta,\lim} + \bar \Sigma_{\Delta,\theta,\lim} )^{-1/2} \right),$$
and the second term converges in distribution to 
$$\mathcal N\left(0, (r \bar\Sigma_{\theta,\lim} + \bar \Sigma_{\Delta,\theta,\lim} )^{-1/2} \bar \Sigma_{\Delta,\theta,\lim}  (r \bar\Sigma_{\theta,\lim} + \bar \Sigma_{\Delta,\theta,\lim} )^{-1/2} \right).$$
Since the two terms are independent, we can add up their limiting covariance matrices and get
$$\sqrt{n} \left(\frac n N \bar\Sigma_\theta + \bar \Sigma_{\Delta,\theta} \right)^{-1/2} \left(\nabla L^+(\theta) - \nabla L(\theta) \right) \cd \mathcal N(0,I),$$
as desired.

\paragraph{Asymptotic normality of $\hat\theta^+$.} 
We follow an argument similar to the classical proof of asymptotic normality of M-estimators (see Theorem 5.23 in \cite{vandervaart1998asymptotic}).
Given a function $g$, we use the shorthand notation
\begin{align}
  \E_n g &:= \frac{1}{n} \sum_{i=1}^n g(X_i,Y_i), \quad \GG_n g = \sqrt{n}\left(\E_n g - \E[g(X,Y)]\right); \label{eq:Gn}\\
  \widetilde \E_N^{f^+} g &:= \frac{1}{N K} \sum_{i=1}^N \sum_{j=1}^K g(\widetilde X_i, f^{(j)}(\widetilde X_i)),\quad \widetilde \GG_N^{f^+} g := \sqrt{N} \left( \widetilde \E_N^{f^+} g - \frac 1 K \sum_{j=1}^K \E_X[g(X,f^{(j)}(X)) ] \right); \label{eq:GNf}\\
  \widetilde \E_N^{\bar f} g &:= \frac{1}{N } \sum_{i=1}^N  g(\widetilde X_i, \bar f(\widetilde X_i)),\quad \widetilde \GG_N^{\bar f} g := \sqrt{N} \left( \widetilde \E_N^{\bar f} g - \E[g(X,\bar f(X))] \right); \label{eq:GNfbar}\\
  \E_n^{f^+} g &:= \frac{1}{n} \sum_{j=1}^K \sum_{i\in I_j} g( X_i, f^{(j)}(X_i)), \quad \GG_n^{f^+} g := \sqrt{n}  \left(\E_n^{f^+} g - \frac 1 K \sum_{j=1}^K \E_X[g(X,f^{(j)}(X))]\right); \label{eq:Gnf}\\
  \E_n^{\bar f} g &:= \frac{1}{n}  \sum_{i = 1}^n g( X_i, \bar f(X_i)), \quad \GG_n^{\bar f} g := \sqrt{n}  (\E_n^{\bar f} g -  \E[g(X,\bar f(X))]) \label{eq:Gnfbar }.
\end{align}

The differentiability and local Lipschitzness of the loss at $\theta^*$
imply that
for every (possibly random) sequence $h_n = O_P(1)$, we have
\begin{align*}
  \GG_n^{f^+}\left[\sqrt{n} \left(\ell_{\theta^* + \frac{h_n}{\sqrt{n}}} - \ell_{\theta^*} \right) \right] &= \GG_n^{f^+}\left[ h_n^\top \nabla \ell_{\theta^*}\right] + o_P(1).
\end{align*}
By essentially the same argument as in the proof of the asymptotic normality of $\nabla L^+(\theta)$, we can substitute the average over models $f^{(j)}$ for $\bar f$ via Lemma \ref{lemma:term2_general}, thus getting
\[ \GG_n^{f^+}\left[ h_n^\top \nabla \ell_{\theta^*}\right] = \GG_n^{\bar f}\left[ h_n^\top \nabla \ell_{\theta^*}\right] + o_P(1).\]
Analogously, we have $\GG_n [\sqrt{n} (\ell_{\theta^* + h_n/\sqrt{n}} - \ell_{\theta^*} )] = \GG_n[h_n^\top \nabla \ell_{\theta^*}] + o_P(1)$, and, by using Lemma~\ref{lemma:term1_general}, 
\[\tilde \GG_N^{f^+} [\sqrt{n} (\ell_{\theta^* + h_n/\sqrt{n}} - \ell_{\theta^*} )] = \tilde \GG_N^{\bar f}[h_n^\top \nabla \ell_{\theta^*}] + o_P(1).\]
Next, denoting $H_{\theta^*}^{f^+} = \frac 1 K \sum_{j=1}^K \nabla^2\E_X[\ell_{\theta^*}(X,f^{(j)}(X))]$, we apply a second-order Taylor expansion to get
\begin{align*}
  n \E_n\left(\ell_{\theta^* + \frac{h_n}{\sqrt{n}}} - \ell_{\theta^*}\right)
  &= \frac 1 2 h_n^\top H_{\theta^*} h_n
  + h_n^\top \GG_n \nabla\ell_{\theta^*} + o_P(1);\\
   n \widetilde \E_N^{ f^+}\left(\ell_{\theta^* + \frac{h_n}{\sqrt{n}}} - \ell_{\theta^*}\right) &= \frac{1}{2} h_n^\top H_{\theta^*}^{f^+} h_n + \sqrt{\frac{n}{N}} h_n ^\top \widetilde \GG_N^{\bar f} \nabla \ell_{\theta^*} + o_P(1);\\
   - n \E_n^{f^+}\left(\ell_{\theta^* + \frac{h_n}{\sqrt{n}}} - \ell_{\theta^*}\right) &=  -\frac{1}{2} h_n^\top H_{\theta^*}^{f^+} h_n - h_n ^\top \GG_n^{\bar f}  \nabla \ell_{\theta^*} + o_P(1).\\
\end{align*}
Notice that we have $L^+(\theta) = \tilde \E_N^{f^+} \ell_\theta + \E_n \ell_\theta - \E_n^{f^+} \ell_\theta$. Thus, if we add up the three equations above, we get the following:
$$n \left(L^+\left(\theta^* + \frac{h_n}{\sqrt{n}}\right) - L^+(\theta^*)\right) = \frac{1}{2} h_n^\top H_{\theta^*} h_n + h_n^\top \left(\GG_n \nabla\ell_{\theta^*} + \sqrt{\frac{n}{N}} \tilde \GG_N^{\bar f} \nabla\ell_{\theta^*} - \GG_n^{\bar f} \nabla\ell_{\theta^*}\right) + o_P(1).$$

We now evaluate this expression for two values of $h_n$, in particular $h_n^* = \sqrt{n}(\hat\theta^+ - \theta^*)$ and $h_n' = -H_{\theta^*}^{-1}\left(\GG_n \nabla\ell_{\theta^*} + \sqrt{\frac{n}{N}} \tilde \GG_N^{\bar f} \nabla\ell_{\theta^*} - \GG_n^{\bar f} \nabla\ell_{\theta^*}\right)$.
This gives
\begin{align*}
&n \left(L^+(\hat \theta^+) - L^+(\theta^*)\right) = \frac 1 2 (h_n^*)^\top H_{\theta^*} h_{n}^* + (h_n^*)^\top \left(\GG_n \nabla\ell_{\theta^*} + \sqrt{\frac{n}{N}} \tilde \GG_N^{\bar f} \nabla\ell_{\theta^*} - \GG_n^{\bar f} \nabla\ell_{\theta^*}\right) + o_P(1);\\
&n \left(L^+(\theta^* - h_n'/\sqrt{n}) - L^+(\theta^*)\right) = -\frac 1 2 (h_n')^\top H_{\theta^*} h_n' + o_P(1);
\end{align*}
By the definition of $\hat\theta^+$, the left-hand side of the first equation is smaller than the left-hand side of the second equation, hence the same relation is true for the right-hand sides. Taking the difference of the right-hand sides and completing the square gives
$$\frac 1 2 (h_n^* - h_n')^\top H_{\theta^*}(h_n^* - h_n') + o_P(1) \leq 0.$$
Since the Hessian is positive definite, we must have $h_n^* = h_n' + o_P(1)$, that is:
$$\sqrt n (\hat\theta^+ - \theta^*) =  - H_{\theta^*}^{-1}\left(\GG_n \nabla\ell_{\theta^*} + \sqrt{\frac{n}{N}} \tilde \GG_N^{\bar f} \nabla\ell_{\theta^*} - \GG_n^{\bar f} \nabla\ell_{\theta^*} \right) + o_P(1).$$
The final statement follows by a standard application of the central limit theorem to the second term. In particular, letting $r = \lim \frac n N$, we have:
\begin{align*}
    &\GG_n \nabla\ell_{\theta^*} + \sqrt{\frac{n}{N}} \tilde \GG_N^{\bar f} \nabla\ell_{\theta^*} - \GG_n^{\bar f} \nabla\ell_{\theta^*}\\
    &\quad = \frac{1}{\sqrt n} \sum_{i=1}^n \left( \nabla \ell_{\theta^*,i} - \nabla \ell^{\bar f}_{\theta^*,i} - \E[\nabla \ell_{\theta^*,i} - \nabla \ell^{\bar f}_{\theta^*,i}] \right) + \sqrt{\frac n N} \frac{1}{\sqrt{N}} \sum_{i=1}^N \left(\nabla \ell_{\theta^*,i}^{\bar f} - \E[\nabla \ell_{\theta^*}^{\bar f}] \right)\\
    &\quad \stackrel{d}{\to} \mathcal N\left(0, \Var(\nabla \ell_{\theta^*} - \nabla \ell_{\theta^*}^{\bar f} ) + r \Var(\nabla \ell_{\theta^*}^{\bar f})\right).
\end{align*}
Therefore, $- H_{\theta^*}^{-1}\left(\GG_n \nabla\ell_{\theta^*} + \sqrt{\frac{n}{N}} \tilde \GG_N^{\bar f} \nabla\ell_{\theta^*} - \GG_n^{\bar f} \nabla\ell_{\theta^*} \right)$ converges to $\mathcal{N}(0,\bar \Sigma)$, where 
$$\bar \Sigma = H_{\theta^*}^{-1} \left(\bar \Sigma_{\Delta,\theta^*}  + r \bar\Sigma_{\theta^*} \right) H_{\theta^*}^{-1}.$$

\end{document}